\documentclass{article}
\usepackage{microtype}
\usepackage{graphicx}
\usepackage{subcaption}
\usepackage{booktabs}
\usepackage{hyperref}

\usepackage[preprint]{icml2026}

\usepackage{amsmath}
\usepackage{amssymb}
\usepackage{mathtools}
\usepackage{amsthm}
\usepackage{algorithm}
\usepackage{algorithmic}
\usepackage{multirow}

\usepackage[capitalize,noabbrev]{cleveref}

\theoremstyle{plain}
\newtheorem{theorem}{Theorem}[section]
\newtheorem{proposition}[theorem]{Proposition}

\theoremstyle{definition}

\theoremstyle{remark}

\usepackage[textsize=tiny]{todonotes}

\DeclareSymbolFont{extraup}{U}{zavm}{m}{n}
\DeclareMathSymbol{\varheart}{\mathalpha}{extraup}{86}
\DeclareMathSymbol{\vardiamond}{\mathalpha}{extraup}{87}
\DeclareMathSymbol{\varclubsuit}{\mathalpha}{extraup}{88}

\icmltitlerunning{Training Data Selection with Gradient Orthogonality for Efficient Domain Adaptation}

\begin{document}

\twocolumn[
\icmltitle{Training Data Selection with Gradient Orthogonality for\\ Efficient Domain Adaptation}

    \begin{icmlauthorlist}
    Xiyang Zhang$^{{\blacklozenge\clubsuit}}$, \hspace{0.1cm}
    Yuanhe Tian$^{\varheart\clubsuit}$ \hspace{0.1cm}
    Hongzhi Wang$^{\blacklozenge}$ \hspace{0.1cm}
    Yan Song$^{{\spadesuit}}$ \\
    \vspace{0.2cm}
    $^{\blacklozenge}$Harbin Institute of Technology
    \hspace{0.1cm}
    $^{\clubsuit}$Zhongguancun Academy \\
    $^{\varheart}$Zhongguancun Institute of Artificial Intelligence \hspace{0.2cm}
    $^{\spadesuit}$University of Science and Technology of China \\
    \vspace{0.2cm}
    $^{\blacklozenge}$\texttt{s-zhangxiy24@bjzgca.edu.cn} \hspace{0.1cm}
    $^{\varheart}$\texttt{tianyuanhe@zgci.ac.cn} \\
    $^{\blacklozenge}$\texttt{wangzh@hit.edu.cn} \hspace{0.1cm}
    $^{\spadesuit}$\texttt{clksong@gmail.com} \\
    \vspace{0.4cm}
    \end{icmlauthorlist}

]

\printAffiliationsAndNotice{}  

\begin{abstract}
Fine-tuning large language models (LLMs) for specialized domains often necessitates a trade-off between acquiring domain expertise and retaining general reasoning capabilities, a phenomenon known as catastrophic forgetting. Existing remedies face a dichotomy: gradient surgery methods offer geometric safety but incur prohibitive computational costs via online projections, while efficient data selection approaches reduce overhead but remain blind to conflict-inducing gradient directions. In this paper, we propose Orthogonal Gradient Selection (OGS), a data-centric method that harmonizes domain performance, general capability retention, and training efficiency. OGS shifts the geometric insights of gradient projection from the optimizer to the data selection stage by treating data selection as a constrained decision-making process. By leveraging a lightweight Navigator model and reinforcement learning techniques, OGS dynamically identifies training samples whose gradients are orthogonal to a general-knowledge anchor. This approach ensures naturally safe updates for target models without modifying the optimizer or incurring runtime projection costs. Experiments across medical, legal, and financial domains demonstrate that OGS achieves excellent results, significantly improving domain performance and training efficiency while maintaining or even enhancing performance on general tasks such as GSM8K.
\end{abstract}

\section{Introduction}
\label{sec:intro}

The remarkable success of large language models (LLMs) has catalyzed their deployment across high-stakes vertical domains, from clinical decision support \cite{singhal2023large,liu2024bootstrapping,tian-2025-feature} to legal document analysis \cite{colombo2024saullm,katz2024gpt,colombo2024saullm} and financial advisory systems \cite{wang2023fingpt,wu2023bloomberggpt,xie2023pixiu}. However, domain-adapted LLMs face a fundamental paradox: the very process of specialization tends to erode the general-purpose capabilities that made these models valuable in the first place. Practitioners face what amounts to a zero-sum game between specialists and generalists, which means optimizing for medical question answering degrades mathematical reasoning, while preserving arithmetic skills constrains domain learning \cite{luo2025empirical}. This phenomenon, known as catastrophic forgetting \cite{mccloskey1989catastrophic}, has emerged as a central bottleneck in the efficient and practical deployment of domain-adapted LLMs.

The geometric roots of this problem lie in gradient interference \cite{yu2020gradient}. When a model updates its parameters to minimize domain-specific loss, the resulting gradient may point in a direction that increases the loss on general tasks. Formally, forgetting occurs when the inner product of the domain task's gradient and the general task's gradient is less than zero \cite{lopez2017gradient}, causing the model to drift away from regions of the parameter space that support general reasoning. This manifold drift accumulates over thousands of training steps, ultimately severing the model's connection to its pre-trained knowledge \cite{kumar2022fine}, shown as catastrophic forgetting.

Prior work has attempted to resolve this conflict from two distinct angles, yet both suffer from limitations that force a compromise between effectiveness and efficiency. The first family, optimizer-level gradient surgery such as GEM \cite{lopez2017gradient} and SafeGrad \cite{yi2025gradient}, actively projects task gradients onto a safe subspace orthogonal to protected knowledge \cite{saha2021gradient}. While theoretically principled, their computational demands scale poorly: each optimization step requires computing reference gradients and executing high-dimensional projections \cite{chaudhry2018efficient}. For LLMs exceeding dozens of billion parameters, this online overhead becomes prohibitive, rendering such methods impractical for scale-efficient training \cite{shi2025continual}. We need a faster way.

The second route embraces data selection as a scalable alternative \cite{song-etal-2012-entropy,coleman2019selection,liu2019reinforced,killamsetty2021glister}. Methods such as LESS \cite{xia2024less} and GrADS \cite{liu2025learn} improve efficiency by filtering data offline based on gradient similarity or magnitude. However, these approaches often remain blind to the geometric nature of forgetting \cite{lopez2017gradient}. LESS focuses on target relevance without assessing conflicts with general knowledge, while GrADS relies on gradient norms which cannot distinguish between orthogonal (safe) and opposing (harmful) updates. Consequently, these methods may inadvertently select data that maximizes domain gain at the cost of severe forgetting, failing to solve the multi-objective challenge.

This analysis reveals a critical methodological gap: we need a solution that possesses the geometric precision of gradient surgery but maintains the computational efficiency of data selection. Considering that the small proxy models can guide the LLMs \cite{burns2023weak, xie2023data}, the natural question emerges: \emph{Can we employ proxy model and intelligent decision-making to transfer the geometric insights of gradient projection to the data selection phase?}

To this end, we propose \textbf{Orthogonal Gradient Selection (OGS)}, a novel data-centric method designed to simultaneously boost domain performance, prevent catastrophic forgetting, and maximize training efficiency. OGS fundamentally rethinks data selection not as a static filtering task, but as a constrained optimization problem guided by gradient geometry. Our approach relies on three key innovations:

\textbf{1. Offline Geometric Awareness:} We construct a \emph{general-knowledge anchor} whose gradient defines the direction to protect. Instead of projecting gradients during training, we select data samples that naturally produce gradients orthogonal to this anchor, effectively performing ``data surgery'' to ensure safe updates in a efficient way.

\textbf{2. Navigator-Target Architecture:} To bypass the cost of computing gradients on LLMs, we introduce a lightweight ``Navigator'' proxy model. We demonstrate that gradient geometry features (orthogonality and conflict) computed on the Navigator transfer reliably to much larger Target models, allowing us to perform expensive geometric analysis.

\textbf{3. RL-Driven Selection Policy:} We formulate the selection process as a decision-making task solvable via reinforcement learning (RL) mechanisms. By optimizing a reward function that balances domain learning speed with orthogonality constraints, OGS dynamically curates a curriculum that navigates the trade-off between plasticity (learning new tasks) and stability (remembering old ones).

The paradigm shift to OGS carries profound practical implications. By decoupling geometric analysis from the training loop, OGS remains compatible with standard optimizers and efficient training pipelines such as LoRA \cite{hu2022lora}, incurring negligible runtime overhead. Theoretical analysis confirms that our selection criterion acts as a first-order approximation to the optimal solution of a bilevel optimization problem minimizing post-update general loss.

Empirically, we evaluate OGS across three vertical domains (medical, law, and finance) using target models ranging from 1.7B to 14B parameters. The results demonstrate that OGS yields Pareto-optimal improvements: it exceeds the domain accuracy of baselines while preserving pre-training performance on general benchmarks. Crucially, OGS achieves this with higher training throughput than gradient surgery methods, validating its potential as a general-purpose method for efficient, safe, and effective domain adaptation.

In summary, we make the following contributions:
\begin{itemize}
    \item We propose OGS, a data-centric method that leverages gradient geometry to simultaneously enhance domain adaptation and prevent catastrophic forgetting, offering a scalable alternative to online gradient surgery.
    \item We introduce a Navigator-Target architecture coupled with a reinforcement learning-based selection policy, enabling the efficient transfer of geometric insights from small proxy models to large-scale target models.
    \item We provide theoretical grounding for OGS via an interference decomposition theorem and establish its selection rule as a solution to a constrained bilevel optimization problem with detailed proof.
    \item We demonstrate that OGS achieves superior efficiency across multiple domains, improving domain performance and training speed while effectively neutralizing the risk of catastrophic forgetting.
\end{itemize}

\section{Related Work}
\label{sec:related}

\subsection{Continual Learning and Catastrophic Forgetting in LLMs}
Adapting LLMs to vertical domains often necessitates a trade-off between acquiring domain-specific knowledge and retaining general capabilities \cite{wu2023bloomberggpt,shu2024lawllm,tian-etal-2024-chimed,colombo2024saullm,su2025fusing,liu2025balanced}, a phenomenon known as the stability-plasticity dilemma \cite{mermillod2013stability}. While conventional parameter-efficient fine-tuning (PEFT) methods mitigate forgetting by freezing the backbone \cite{houlsby2019parameter, hu2022lora}, they do not fundamentally resolve gradient conflicts between new and old tasks. Replay-based methods \cite{rebuffi2017icarl} remain the gold standard but introduce the challenge of selecting a representative replay buffer. Recent works have explored prompt-based approaches \cite{wang2022learning} or model merging \cite{ilharco2022editing, yadav2023ties} to alleviate forgetting. However, these methods often operate at the model level rather than the data level, potentially missing the root cause of interference: the training data itself \cite{zhou2023lima}.

\subsection{Gradient Surgery and Conflicting Objectives}
Gradient surgery methods aim to manipulate gradient updates during optimization to mitigate interference between tasks. Seminal works like GEM \cite{lopez2017gradient} and A-GEM \cite{chaudhry2018efficient} project gradients of new tasks onto the feasible region defined by previous tasks. PCGrad \cite{yu2020gradient} projects conflicting gradients onto the normal plane of each other to resolve interference in multi-task learning. More recently, \textbf{SafeGrad} \cite{yi2025gradient} applied this concept to safety alignment, projecting user-task gradients onto the subspace orthogonal to safety alignment gradients to prevent ``safety unlearning." While theoretically elegant, these optimizer-level solutions suffer from a fatal bottleneck: they require computing and projecting gradients for reference tasks at every optimization step. This introduces prohibitive computational and memory overhead (often $2\times$ to $3\times$ cost), making them impractical for LLMs exceeding high-level parameters \cite{abbes2025revisiting}. In contrast, OGS transfers the geometric insights of gradient surgery to the data selection stage. By identifying naturally orthogonal data offline, OGS achieves the stability benefits of gradient surgery with the efficiency of standard SFT, requiring no runtime overhead.

\subsection{Data Selection for Efficient Fine-tuning}
Data selection has emerged as a crucial paradigm for efficient LLM training. Early methods focused on quality filtering via heuristics or perplexity \cite{zhang-etal-2019-multiplex,wenzek2020ccnet, penedo2023refinedweb}. Recent gradient-based approaches seek to select high-value samples mathematically. \textbf{LESS} \cite{xia2024less} selects data that minimizes loss on a target validation set by maximizing gradient similarity. While effective for targeted induction, LESS is blind to ``safety": maximizing similarity to a target domain inadvertently selects samples that maximally conflict with general capabilities (e.g., selecting medical misconceptions that contradict general reasoning). \textbf{GrADS} \cite{liu2025learn} proposes a gradient-aware selection strategy based on gradient magnitude and uncertainty. However, gradient magnitude can be misleading \cite{wu2025imbalanced}, i.e., a sample with moderate gradient magnitude can still be highly destructive if its direction is diametrically opposed to general knowledge (i.e., negative cosine similarity).

\section{Methodology}
\label{sec:method}

We present Orthogonal Gradient Selection (OGS) in detail. We begin by formalizing the constrained optimization problem that captures the stability-plasticity trade-off, then introduce our Navigator-Target architecture that enables efficient gradient geometry computation at scale. The core of our method lies in two geometric metrics—orthogonality and conflict—that guide data selection toward the safe subspace. We conclude with the selection strategies and the constrained policy optimization method. Figure \ref{fig:method} shows the entire intuitive flow of the OGS method.

\begin{figure*}[ht]
    \centering
    \includegraphics[width=\textwidth, trim=0 50 0 0]{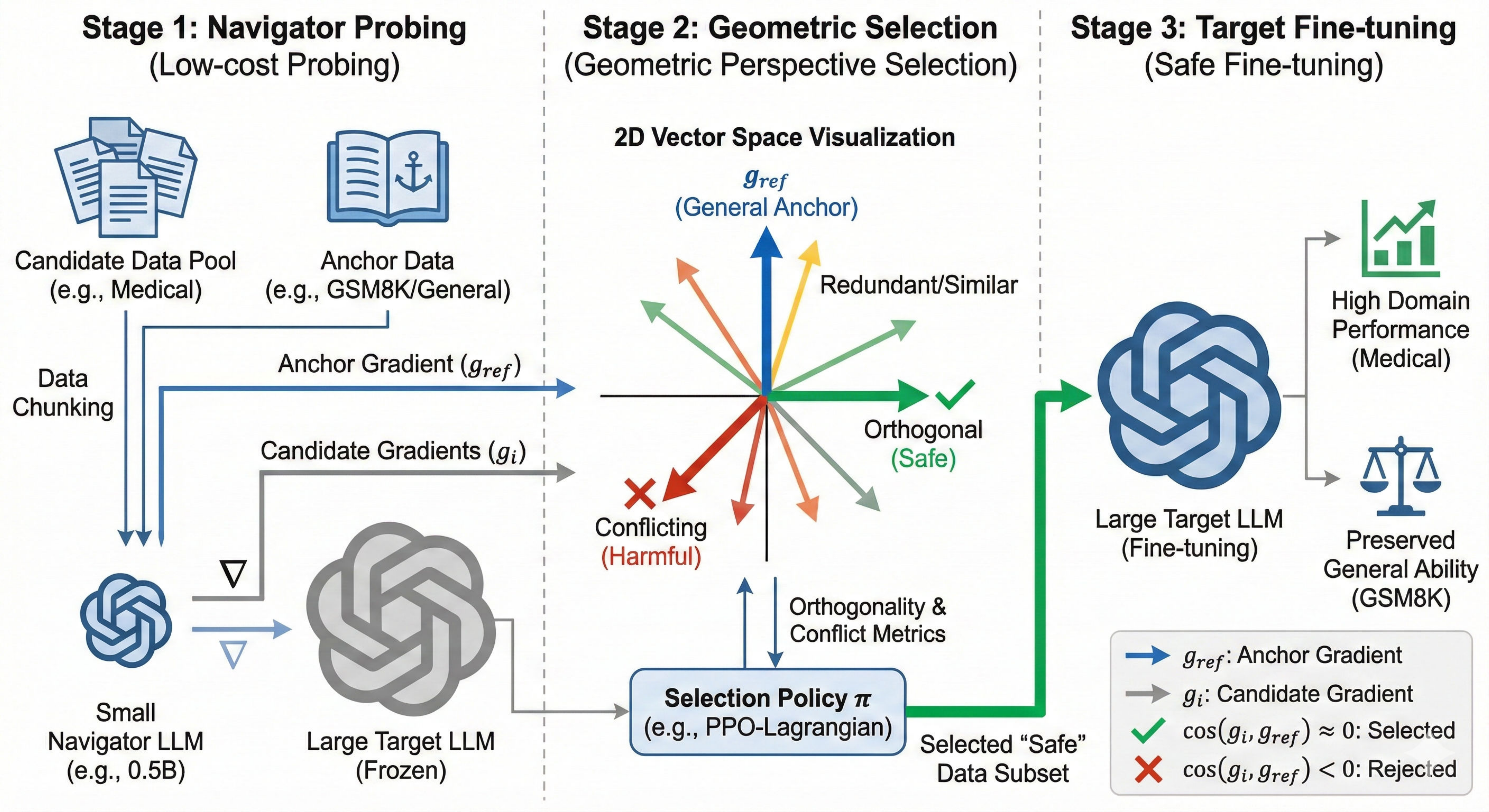}
    \caption{The entire intuitive flow of the OGS method, including Navigator Probing, Geometric Selection, Target Fine-tuning three stages, aiming to choose the best data subset.}
    \label{fig:method}
    \vspace{-4mm}
\end{figure*}

\subsection{Problem Formulation}
\label{sec:problem}

Consider a pre-trained language model with parameters $\theta$, a domain-specific training pool $\mathcal{D}_{\text{domain}}$ (e.g., medical question-answering pairs), and a general-knowledge validation set $\mathcal{V}_{\text{gen}}$ representing capabilities we wish to preserve (e.g., mathematical reasoning, world knowledge). Let $\mathcal{R}_{\text{domain}}(\theta)$ and $\mathcal{R}_{\text{gen}}(\theta)$ denote performance metrics on domain and general tasks respectively. The goal of continual domain adaptation is to maximize domain performance while maintaining general capabilities above a threshold.

We formalize this as a constrained optimization problem over data selection policies. Let $\pi$ denote a policy that, at each training step $t$, selects a batch of samples from the available data pools. The training trajectory $\tau = \{(s_t, a_t, s_{t+1})\}_{t=0}^{T-1}$ induced by $\pi$ produces final parameters $\theta_T$. Our objective is:

\begin{equation}
\label{eq:constrained_opt}
\max_{\pi} \; \mathbb{E}_{\tau \sim \pi}\left[\mathcal{R}_{\text{domain}}(\theta_T)\right] \quad \end{equation}

\begin{equation}
\text{s.t.} \quad \mathbb{E}_{\tau \sim \pi}\left[\mathcal{R}_{\text{gen}}(\theta_T)\right] \geq \mathcal{R}_{\text{gen}}(\theta_0) + \delta
\end{equation}

where $\delta \geq 0$ specifies the minimum acceptable improvement (or preservation when $\delta = 0$) of general capabilities. This formulation explicitly encodes the stability-plasticity trade-off: the objective drives domain learning while the constraint enforces capability preservation.

\subsection{Navigator-Target Architecture}
\label{sec:navigator}

A naive approach to gradient-geometry-aware selection would compute, for each candidate sample, its gradient on the target model and measure geometric relationships with anchor gradients. For a huge size parameter target model, this is computationally prohibitive, as each gradient computation requires a full forward-backward pass, consuming hundreds of gigabytes of memory.

A key insight is that gradient geometric features exhibit remarkable transferability across model scales. While the absolute gradient magnitudes differ substantially between a 0.5B Navigator and a 70B Target, the relative geometric relationships, such as cosine similarities, orthogonality patterns, and conflict structures, remain highly correlated. This phenomenon arises because models are from the same architectural family, trained on similar corpora, and develop aligned representation spaces where semantically similar samples induce geometrically similar gradient directions.

We exploit this transferability through a two-phase method. In Phase 1 (Strategy Learning), a lightweight Navigator model $\mathcal{M}_n$ with parameters $\theta_n$ (e.g., Qwen3-0.6B) serves as the computational workhorse. We compute gradient geometry features on $\mathcal{M}_n$, train a selection policy via reinforcement learning, and validate geometric predictions against ground-truth forgetting measurements. In Phase 2 (Strategy Application), the learned policy guides data selection for the Target model $\mathcal{M}_t$ (e.g., Qwen3-14B).

This architecture converts the computational burden from multiplicative (per training step on Target) to additive (one-time preprocessing on Navigator). The Navigator processes the entire candidate pool once, after which Target training proceeds at full speed without geometric overhead.

\subsection{Gradient Geometry Metrics}
\label{sec:geometry}

The foundation of OGS is a set of metrics that quantify how a candidate sample's gradient relates to protected general knowledge. We first construct an anchor that represents the gradient direction of general capabilities, then define orthogonality and conflict metrics relative to this anchor.

\paragraph{Anchor Gradient Construction.}
We curate a compact anchor dataset $\mathcal{D}_{\text{anchor}}$ comprising 300-500 exemplars from capabilities we wish to protect: mathematical reasoning problems from GSM8K, factual questions from MMLU, and instruction-following examples from Alpaca. The anchor gradient is the average gradient over this set:

\begin{equation}
\label{eq:anchor}
\mathbf{g}_{\text{ref}} = \frac{1}{|\mathcal{D}_{\text{anchor}}|} \sum_{x \in \mathcal{D}_{\text{anchor}}} \nabla_\theta \mathcal{L}(x; \theta)
\end{equation}

This vector defines the ``north star'' direction in parameter space, which means updating aligned with $\mathbf{g}_{\text{ref}}$ improves general capabilities, while updates opposing it induce forgetting. The anchor gradient is computed once and cached for the entire selection process.

For enhanced sensitivity, we optionally employ \emph{active anchor selection}, preferentially including general samples that exhibit maximum conflict with domain data:

\begin{equation}
\mathcal{D}_{\text{anchor}}^{\text{active}} = \arg\min_{D \subset \mathcal{D}_{\text{gen}}, |D|=k} \cos\left(\bar{\mathbf{g}}_D, \bar{\mathbf{g}}_{\text{domain}}\right)
\end{equation}

where $\bar{\mathbf{g}}_D$ denotes the mean gradient over subset $D$. This selects anchors representing knowledge most vulnerable to domain-induced forgetting.

\paragraph{Orthogonality Score.}
For a candidate sample $x_i$ with gradient $\mathbf{g}_i = \nabla_\theta \mathcal{L}(x_i; \theta)$, we define orthogonality as:

\begin{equation}
\label{eq:orthogonality}
\text{Orth}(x_i) = 1 - \left|\cos(\mathbf{g}_i, \mathbf{g}_{\text{ref}})\right| = 1 - \frac{|\mathbf{g}_i^\top \mathbf{g}_{\text{ref}}|}{\|\mathbf{g}_i\| \cdot \|\mathbf{g}_{\text{ref}}\|}
\end{equation}

Orthogonality ranges from 0 to 1, with higher values indicating that the sample's gradient lies closer to the hyperplane perpendicular to $\mathbf{g}_{\text{ref}}$. A sample with $\text{Orth}(x_i) \approx 1$ induces updates in the \emph{safe subspace}, which means these directions neither help nor harm general capabilities. Such samples are able to be trained freely without risking forgetting.

\paragraph{Conflict Score.}
While orthogonality measures proximity to the safe subspace, conflict directly quantifies interference:

\begin{equation}
\label{eq:conflict}
\text{Conf}(x_i) = -\cos(\mathbf{g}_i, \mathbf{g}_{\text{ref}}) = -\frac{\mathbf{g}_i^\top \mathbf{g}_{\text{ref}}}{\|\mathbf{g}_i\| \cdot \|\mathbf{g}_{\text{ref}}\|}
\end{equation}

Conflict ranges from $-1$ to $+1$. Positive conflict ($\text{Conf} > 0$) indicates that the sample's gradient opposes the anchor direction, meaning that training on this sample would increase general-task loss, inducing forgetting. Negative conflict ($\text{Conf} < 0$) indicates synergy: the sample simultaneously benefits both domain and general capabilities. Zero conflict corresponds to perfect orthogonality.

These metrics decompose gradient relationships into actionable signals. It illustrates geometric intuition: the anchor gradient $\mathbf{g}_{\text{ref}}$ defines a protected direction, and candidate gradients are classified by their angle relative to direction.

\subsection{Selection Strategies}
\label{sec:strategies}

Armed with geometric metrics, we define two complementary selection strategies that together achieve robust capability preservation. These two strategies can also be mixed.

\paragraph{Orthogonal Protection.}
The primary strategy selects samples residing within the safe subspace:

\begin{equation}
\mathcal{D}_{\text{selected}}^{\text{orth}} = \{x_i \in \mathcal{D}_{\text{domain}} : \text{Orth}(x_i) \geq \tau_{\text{orth}}\}
\end{equation}

where $\tau_{\text{orth}} \in [0, 1]$ is an orthogonality threshold. These samples enable unimpeded domain learning: their gradients update parameters in directions that leave general capabilities essentially unchanged. This strategy maximizes learning efficiency when sufficient orthogonal samples exist.

\paragraph{Conflict-Aware Replay.}
When domain training inevitably includes some conflicting samples, we counteract their harmful effects through targeted replay. We identify general-knowledge samples currently under attack:

\begin{equation}
\mathcal{D}_{\text{replay}} = \{x_j \in \mathcal{D}_{\text{gen}} : \text{Conf}(x_j) > \tau_{\text{conf}}\}
\end{equation}

These are samples whose gradients most strongly oppose recent domain updates—precisely the knowledge being forgotten. By interleaving replay of these samples, we inject corrective gradients that neutralize interference.

\paragraph{Hybrid Dynamic Strategy.}
We employ a hybrid approach that adapts the mixture ratio based on training dynamics:

\begin{equation}
\mathcal{B}_t = \alpha(t) \cdot \mathcal{B}_t^{\text{orth}} + (1 - \alpha(t)) \cdot \mathcal{B}_t^{\text{replay}}
\end{equation}

where $\alpha(t)$ evolves during training. Early phases emphasize conflict-aware replay ($\alpha$ small) to establish stability, while later phases shift toward orthogonal domain samples ($\alpha$ large) to accelerate specialization. The policy network learns to modulate $\alpha(t)$ based on observed performance dynamics to keep the balance.

\subsection{Constrained Policy Optimization}
\label{sec:optimization}

We cast data selection as a Constrained Markov Decision Process (CMDP) and solve it via PPO-Lagrangian optimization. The state $s_t$ encodes current model performance, gradient geometry statistics across data clusters, and training progress. The action $a_t$ selects a data cluster from which to sample the next batch. The reward $r_t$ reflects domain performance improvement, while the cost $c_t$ measures general capability degradation.

The constrained objective from Equation \ref{eq:constrained_opt} is converted to an unconstrained Lagrangian:

\begin{equation}
\label{eq:lagrangian}
\mathcal{L}(\pi, \lambda) = \mathbb{E}_\pi\left[\sum_t \gamma^t r_t\right] - \lambda \cdot \left(\mathbb{E}_\pi\left[\sum_t \gamma^t c_t\right] - \epsilon\right)
\end{equation}

where $\lambda \geq 0$ is the Lagrange multiplier and $\epsilon$ is the constraint budget. We alternate between policy updates (maximizing $\mathcal{L}$ with respect to $\pi$ via PPO) and dual updates (adjusting $\lambda$ based on constraint satisfaction):

\begin{equation}
\lambda_{k+1} = \max\left(0, \lambda_k + \eta_\lambda \cdot (J_C(\pi_k) - \epsilon)\right)
\end{equation}

When the policy violates constraints (excessive forgetting), $\lambda$ increases, penalizing unsafe selections more heavily. When constraints are comfortably satisfied, $\lambda$ decreases, allowing more aggressive domain learning. This adaptive mechanism automatically navigates the stability-plasticity trade-off without manual tuning.

Algorithm \ref{alg:ogs} summarizes the complete OGS pipeline. Phase 1 trains the selection policy on the Navigator, while Phase 2 applies it to the Target with standard fine-tuning. 

\begin{algorithm}[t]
\caption{Orthogonal Gradient Selection (OGS)}
\label{alg:ogs}
\begin{algorithmic}[1]
\STATE \textbf{Input:} Domain pool $\mathcal{D}_{\text{domain}}$, general pool $\mathcal{D}_{\text{gen}}$, Navigator $\mathcal{M}_n$, Target $\mathcal{M}_t$
\STATE \textbf{Output:} Fine-tuned Target parameters $\theta_t^*$

\STATE \textit{// Phase 0: Preprocessing}
\STATE Construct anchor set $\mathcal{D}_{\text{anchor}}$ from $\mathcal{D}_{\text{gen}}$
\STATE Compute anchor gradient $\mathbf{g}_{\text{ref}}$ via Eq. \ref{eq:anchor}
\STATE Cluster $\mathcal{D}_{\text{domain}}$ and $\mathcal{D}_{\text{gen}}$ into $K$ clusters

\STATE \textit{// Phase 1: Policy Learning on Navigator}
\FOR{episode $e = 1$ to $E$}
    \STATE Reset Navigator: $\theta_n \leftarrow \theta_n^{(0)}$
    \FOR{step $t = 0$ to $T-1$}
        \STATE Compute cluster geometries: $\{\text{Orth}_k, \text{Conf}_k\}_{k=1}^K$
        \STATE Construct state $s_t$ from geometries and performance
        \STATE Sample action $a_t \sim \pi_\psi(\cdot | s_t)$
        \STATE Sample batch $\mathcal{B}_t$ from cluster $a_t$
        \STATE Update Navigator: $\theta_n \leftarrow \theta_n - \eta \nabla \mathcal{L}(\mathcal{B}_t)$
        \STATE Compute reward $r_t$ and cost $c_t$
    \ENDFOR
    \STATE Update policy $\pi_\psi$ via PPO-Lagrangian
\ENDFOR

\STATE \textit{// Phase 2: Policy Application on Target}
\FOR{step $t = 0$ to $T_{\text{target}}-1$}
    \STATE Compute state $s_t$ for Target
    \STATE Select cluster $a_t = \arg\max_a \pi_\psi(a | s_t)$
    \STATE Sample batch $\mathcal{B}_t$ from cluster $a_t$
    \STATE Update Target via standard SFT: $\theta_t \leftarrow \theta_t - \eta \nabla \mathcal{L}(\mathcal{B}_t)$
\ENDFOR
\STATE \textbf{return} $\theta_t^* = \theta_t$
\end{algorithmic}
\end{algorithm}

\section{Theoretical Analysis}
\label{sec:theory}

We provide a theoretical foundation for OGS, demonstrating that our geometric constraints are necessary to minimize catastrophic forgetting and that our selection strategy is a first-order optimal solution to the constrained continual learning problem. We provide efficiency analysis as well. All detailed proofs are deferred to Appendix \ref{app:theory}.

\subsection{Geometric Interference and Safety Subspace}

Catastrophic forgetting in fine-tuning arises from the interference between domain-specific updates and general knowledge representations. Let $\theta \in \mathbb{R}^d$ be the model parameters, and $\mathcal{L}_{med}, \mathcal{L}_{gen}$ be the domain and general loss functions, respectively.

\begin{theorem}[Gradient Interference]
\label{thm:interference}
The degradation of general capabilities after a domain update step $\Delta \theta = -\eta \nabla \mathcal{L}_{med}(\theta)$ is governed, at the first order, by the inner product of gradients:
\begin{equation}
    \Delta \mathcal{L}_{gen} = -\eta \langle \nabla \mathcal{L}_{gen}(\theta), \nabla \mathcal{L}_{med}(\theta) \rangle + \mathcal{O}(\eta^2).
\end{equation}
\end{theorem}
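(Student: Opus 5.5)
The statement follows from a first-order Taylor expansion of $\mathcal{L}_{gen}$ around the current parameters. We assume $\mathcal{L}_{gen}$ is twice continuously differentiable, as is standard for smooth network losses. For the explicit constant we also assume its Hessian is bounded in operator norm by some $\beta$ along the segment between $\theta$ and $\theta + \Delta\theta$; this is equivalent to $\nabla \mathcal{L}_{gen}$ being $\beta$-Lipschitz there. We then define the degradation as $\Delta \mathcal{L}_{gen} := \mathcal{L}_{gen}(\theta + \Delta\theta) - \mathcal{L}_{gen}(\theta)$, with $\Delta\theta = -\eta \nabla \mathcal{L}_{med}(\theta)$.

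\textbf{Step 1: exact expansion.} Using Taylor's theorem with integral (or Lagrange) remainder, we write
\begin{equation*}
\mathcal{L}_{gen}(\theta + \Delta\theta) = \mathcal{L}_{gen}(\theta) + \langle \nabla \mathcal{L}_{gen}(\theta), \Delta\theta \rangle + R(\Delta\theta).
\end{equation*}
Here $R(\Delta\theta) = \int_0^1 (1-s)\, \Delta\theta^\top \nabla^2 \mathcal{L}_{gen}(\theta + s\Delta\theta)\, \Delta\theta \, ds$.

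\textbf{Step 2: substitute the step.} Plugging $\Delta\theta = -\eta \nabla \mathcal{L}_{med}(\theta)$ into the linear term gives exactly $-\eta \langle \nabla \mathcal{L}_{gen}(\theta), \nabla \mathcal{L}_{med}(\theta) \rangle$, which is the claimed leading term.

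\textbf{Step 3: bound the remainder.} This is the only step needing care, since $\mathcal{O}(\eta^2)$ has to be justified uniformly in $\eta$. We bound
\begin{equation*}
|R| \le \tfrac{\beta}{2}\|\Delta\theta\|^2 = \tfrac{\beta}{2}\eta^2 \|\nabla \mathcal{L}_{med}(\theta)\|^2 .
\end{equation*}
For fixed $\theta$ the gradient norm is a constant, so this is $\mathcal{O}(\eta^2)$ as $\eta \to 0$. If we do not want a global Hessian bound, we restrict to $\eta \le \eta_0$ so that the segment stays in a compact neighborhood of $\theta$. On that neighborhood, continuity of the Hessian supplies a finite $\beta$.

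Finally, we read off the interpretation used later. When $\langle \nabla \mathcal{L}_{gen}, \nabla \mathcal{L}_{med} \rangle < 0$, the first-order change in $\mathcal{L}_{gen}$ is positive for small $\eta$, which is forgetting. When the inner product vanishes, as in the safe subspace of the orthogonality score, the change is only $\mathcal{O}(\eta^2)$.
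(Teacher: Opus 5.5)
Your proposal is correct and takes the same route as the paper: Taylor-expand $\mathcal{L}_{gen}$ about $\theta$, substitute $\Delta\theta = -\eta \nabla \mathcal{L}_{med}(\theta)$ into the linear term, and absorb the quadratic term into $\mathcal{O}(\eta^2)$. The only difference is rigor. You control the remainder explicitly, via the integral form and a local Hessian bound $\beta$ on a compact neighborhood, which needs only $C^2$. The paper instead writes a second-order expansion with an unjustified $\mathcal{O}(\eta^3)$ tail (implicitly assuming more smoothness) and then folds the Hessian term into $\mathcal{O}(\eta^2)$.
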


Theorem \ref{thm:interference} dictates to prevent forgetting ($\Delta \mathcal{L}_{gen} > 0$), the domain gradient must be non-conflicting with the general gradient. Accordingly, we define the \textbf{Safety Subspace} as $\mathcal{S}_{\perp} = \{ \mathbf{v} \mid \langle \mathbf{v}, \mathbb{E}[\nabla \mathcal{L}_{gen}] \rangle \ge 0 \}$. OGS explicitly selects data gradients residing in $\mathcal{S}_{\perp}$, ensuring safety \textit{a priori}.

\subsection{Optimality and Efficiency}

We formulate valid data selection as a bilevel optimization problem: minimizing general loss degradation (outer loop) subject to selecting $k$ samples for domain maximization (inner loop).

\begin{theorem}[First-Order Optimality]
\label{thm:bilevel}
The optimal selection strategy $\mathbf{w}^*$ that solves the bilevel objective under a first-order approximation is equivalent to maximizing the alignment $\sum w_i \langle \nabla \ell(x_i), \nabla \mathcal{L}_{gen} \rangle$.
\end{theorem}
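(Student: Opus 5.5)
The plan is to make the bilevel problem concrete and then linearize the outer objective with Theorem~\ref{thm:interference}.

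\textbf{Formalization.} Introduce selection weights $\mathbf{w}\in\{0,1\}^N$, or their relaxation to $\mathbf{w}\in[0,1]^N$, subject to $\sum_i w_i = k$. The inner problem takes one descent step on the weighted domain loss:
\[
\theta'(\mathbf{w}) = \theta - \eta \sum_i w_i \nabla \ell(x_i;\theta).
\]
This is the ``inner maximization of domain performance'' with a fixed step budget. The outer problem is
\[
\min_{\mathbf{w}} \; \mathcal{L}_{gen}(\theta'(\mathbf{w})).
\]
If one prefers an inner loop with several steps, unroll them. To first order in $\eta$, the parameter displacement is still $\Delta\theta = -\eta\sum_i w_i\nabla\ell(x_i)$, with all gradients evaluated at the initial $\theta$. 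The discrepancy from evaluating at the initial point is $\mathcal{O}(\eta^2)$, assuming Lipschitz gradients.

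\textbf{Expansion.} Apply Theorem~\ref{thm:interference}, or equivalently a Taylor expansion of $\mathcal{L}_{gen}$ around $\theta$, with $\nabla\mathcal{L}_{med}$ replaced by the weighted sum $\sum_i w_i\nabla\ell(x_i)$. This gives
\[
\mathcal{L}_{gen}(\theta'(\mathbf{w})) = \mathcal{L}_{gen}(\theta) - \eta\sum_i w_i\langle\nabla\ell(x_i),\nabla\mathcal{L}_{gen}(\theta)\rangle + \mathcal{O}(\eta^2\|\mathbf{w}\|^2).
\]
The term $\mathcal{L}_{gen}(\theta)$ does not depend on $\mathbf{w}$. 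Dropping the remainder therefore turns $\min_{\mathbf{w}}$ of the outer loss into $\max_{\mathbf{w}}\sum_i w_i\langle\nabla\ell(x_i),\nabla\mathcal{L}_{gen}\rangle$ under the same constraints. That is exactly the claimed objective. Since the objective is linear in $\mathbf{w}$, the relaxed problem has an integral optimum at a vertex: select the $k$ samples with the largest scores $s_i=\langle\nabla\ell(x_i),\nabla\mathcal{L}_{gen}\rangle$. Approximating $\nabla\mathcal{L}_{gen}$ by the anchor gradient $\mathbf{g}_{\text{ref}}$ from Eq.~\ref{eq:anchor} then connects the rule to the Conf and Orth scores.

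To also encode the domain-maximization side as a constraint, one can add $\sum_i w_i\langle\nabla\ell(x_i),\nabla\mathcal{L}_{med}\rangle\ge\kappa$. The first-order problem is then a linear program. Its Lagrangian is linear in $\mathbf{w}$, with scores $s_i+\mu\langle\nabla\ell(x_i),\nabla\mathcal{L}_{med}\rangle$. When $\mu=0$, meaning the constraint is inactive, the statement holds as written. Otherwise the rule becomes a thresholded version of it, which matches the thresholding in the Orthogonal Protection strategy. I would state explicitly which of these regimes the theorem refers to.

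\textbf{Main obstacle.} The hardest part is making ``equivalent'' rigorous rather than merely approximate. The maximizer of the linearized objective need not coincide with the maximizer of the true bilevel objective. One can show that the two argmaxes agree when the score gap is large enough. Let $\Delta = s_{(k)} - s_{(k+1)}$ be the gap between the $k$-th and $(k+1)$-th largest scores. The linearization error is at most $C\eta^2 k^2 G^2$, where $C$ is the smoothness constant of $\mathcal{L}_{gen}$ and $G$ bounds the per-sample gradient norms. If $\eta\Delta$ exceeds twice this error, no swap of a selected sample for an unselected one can improve the true objective, so the top-$k$ set is exactly optimal for small enough $\eta$. I would present the result in this form: first-order equivalence in general, and exact equivalence under this gap condition.
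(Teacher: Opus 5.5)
Your proposal is correct and is essentially the paper's proof: the same one-step inner update $\theta-\eta\sum_i w_i\nabla\ell(x_i)$, the first-order expansion via Theorem~\ref{thm:interference}, dropping the $\mathbf{w}$-independent term $\mathcal{L}_{gen}(\theta)$, and greedy top-$k$ selection as the global maximizer of the resulting linear objective under $\sum_i w_i=k$. Your score-gap argument (the top-$k$ set is exactly optimal once $\eta\Delta$ exceeds twice the $\mathcal{O}(\eta^2k^2G^2)$ remainder) is a genuine strengthening the paper does not give; the one loose spot is your optional domain-constraint remark, since that LP can have fractional optima, and top-$k$ on the combined score $s_i+\mu\langle\nabla\ell(x_i),\nabla\mathcal{L}_{med}\rangle$ is not the same rule as the $\mathrm{Orth}\ge\tau_{\text{orth}}$ threshold.
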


Theorem \ref{thm:bilevel} proves that our ``Conflict" metric (negative cosine similarity) is mathematically aligned with the optimal policy. Furthermore, we address the feasibility of our dual-model architecture.

\begin{proposition}[Asymptotic Efficiency]
\label{prop:complexity}
Let $\rho = M_n / M_t \ll 1$ be the parameter ratio between Navigator and Target models. The computational overhead of OGS scales with $\mathcal{O}(\rho)$, rendering it negligible compared to the $\mathcal{O}(1)$ overhead of online gradient surgery methods.
\end{proposition}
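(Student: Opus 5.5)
We will prove Proposition~\ref{prop:complexity} by explicit FLOP accounting, using the standard approximation that one forward--backward pass of a dense transformer with $M$ parameters on a sequence of length $L$ costs $c\,M L$ FLOPs for a constant $c$ (roughly $6$), independent of architecture details. Write $C_{\text{SFT}} = c\, M_t L\, N_{\text{tok}}$ for the baseline cost of standard fine-tuning of the Target over $T_{\text{target}}$ steps with batch size $B$. Overhead will be measured as the ratio of extra compute to $C_{\text{SFT}}$.

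First, we bound the OGS overhead. We split Algorithm~\ref{alg:ogs} into its phases. Phase~0 computes $\mathbf{g}_{\text{ref}}$ from $|\mathcal{D}_{\text{anchor}}|$ samples and one gradient per candidate for clustering and geometry, all on the Navigator. This costs $c M_n L(|\mathcal{D}_{\text{anchor}}| + |\mathcal{D}_{\text{domain}}| + |\mathcal{D}_{\text{gen}}|)$. The cosine computations add $\mathcal{O}(M_n)$ per sample, which is dominated. Phase~1 runs $E$ episodes of $T$ Navigator steps. Each step includes a batch update and the per-cluster geometry refresh, so its cost is $c M_n L\, E T (B + K b)$, where $b$ is the number of probe samples per cluster. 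Phase~2 uses only a policy forward pass per step, whose cost is independent of $M_t$ and negligible. The Target update itself is exactly the standard SFT step. Dividing by $C_{\text{SFT}}$ gives an overhead of $\rho \cdot \kappa$, where $\kappa$ is the ratio of Navigator-processed tokens to Target-trained tokens.

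Second, we bound the gradient-surgery overhead. For methods such as GEM, A-GEM, or SafeGrad, every Target step requires at least one additional forward--backward pass on a reference batch, costing $c M_t L B_{\text{ref}}$. It also requires a projection costing $\Theta(M_t)$ (or worse for GEM's QP). Dividing by the per-step SFT cost $c M_t L B$ yields a ratio $B_{\text{ref}}/B + \Theta(1/(LB)) = \Theta(1)$. This ratio does not shrink with $\rho$, which matches the empirical $2$--$3\times$ figure cited in Section~\ref{sec:related}.

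The main obstacle is controlling $\kappa$. The claim $\mathcal{O}(\rho)$ requires that $\kappa$ be treated as a constant not growing with $M_t$. We will make this an explicit assumption: the candidate pool size and the RL budget $E T(B+Kb)$ are fixed multiples of the Target training budget. The natural regime is that OGS selects a subset of the pool that is trained on for one or a few epochs. We will state the result as overhead $= \rho\,\kappa = \mathcal{O}(\rho)$ for fixed $\kappa$. We will note honestly that a large RL budget inflates the constant but not the scaling in $\rho$. Under $\rho \ll 1$, this overhead is negligible relative to the $\Theta(1)$ overhead of surgery, which completes the argument. A memory analysis along the same lines, with $\mathcal{O}(M_n)$ extra storage versus $\mathcal{O}(M_t)$ for storing reference gradients, can be appended as a remark.
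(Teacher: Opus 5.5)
Your proposal is correct and follows essentially the paper's argument. Both take cost proportional to parameter count, so a one-time Navigator pass divided by the Target's SFT cost yields $\rho$, whereas surgery needs at least one extra Target-side reference gradient per step and so incurs $\Theta(1)$ overhead. Your bookkeeping is more careful than the paper's, which omits the Phase~1 RL episodes and silently sets your $\kappa=1$ by writing the overhead as $N|\theta_n|/(N|\theta_t|)$. State one more assumption as explicitly as you state fixed $\kappa$: the Phase~2 state $s_t$ must be built from cached Navigator statistics and training progress, not from per-step evaluations of the Target, since otherwise that term scales with $M_t$ and is not $\mathcal{O}(\rho)$.
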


This establishes OGS as a Pareto-optimal solution, combining the theoretical rigor of gradient projection with the efficiency of standard fine-tuning.

\section{Experiments}
\label{sec:experiments}

To rigorously evaluate the efficacy of our method, we conducted extensive experiments across healthcare, law, and finance three distinct vertical domains on five LLMs. Our empirical analysis investigates following dimensions: the capacity of OGS to achieve domain adaptation performance comparable to full-data training while utilizing significantly fewer samples, and the efficacy of the geometric safety constraint in mitigating catastrophic forgetting, particularly within fragile reasoning tasks.

\begin{table*}[t]
\centering
\caption{\textbf{Main Results on 8B Scale Models (10\% Data Selection).} Comparison of domain adaptation (MedQA, LegalBench, FinQA) and general capability retention (GSM8K, MMLU, ARC-C) using Llama-3.1-8B and Qwen3-8B. The ``Avg." columns represent the macro-average across the three respective tasks. OGS achieves the best balance, significantly outperforming baselines on the critical GSM8K retention metric while matching or exceeding domain performance. \textbf{Bold} indicates the best performance among selection methods.}
\label{tab:main_8b_results}
\resizebox{\textwidth}{!}{%
\begin{tabular}{llcccccccc}
\toprule
\multicolumn{1}{c}{\multirow{2}{*}{\textbf{Model}}} & \multicolumn{1}{c}{\multirow{2}{*}{\textbf{Method}}} & \multicolumn{4}{c}{\textbf{Domain Performance}} & \multicolumn{4}{c}{\textbf{General Capability}} \\
\cmidrule(lr){3-6} \cmidrule(lr){7-10}
 & & \textsc{MedQA} & \textsc{Legal} & \textsc{FinQA} & \textbf{Avg.} & \textsc{GSM8K} & \textsc{MMLU} & \textsc{ARC-C} & \textbf{Avg.} \\
\midrule
\multirow{7}{*}{\textbf{Llama-3.1-8B}} 
 & Full Data (100\%) & \textbf{65.2} & \textbf{58.4} & \textbf{61.3} & \textbf{61.6} & 45.1 & 55.2 & 58.9 & 53.1 \\
 & Random (10\%)     & 50.1 & 45.3 & 48.2 & 47.9 & 50.5 & 62.1 & 64.3 & 59.0 \\
 & Perplexity        & 52.3 & 46.8 & 49.5 & 49.5 & 51.2 & 61.8 & 63.5 & 58.8 \\
 & Influence Func.   & 53.1 & 47.2 & 50.1 & 50.1 & 52.0 & 62.5 & 64.0 & 59.5 \\
 & \textsc{LESS}     & 58.4 & 52.1 & 55.3 & 55.3 & 48.2 & 58.4 & 60.1 & 55.6 \\
 & \textsc{GrADS}    & 57.9 & 53.5 & 56.1 & 55.8 & 53.5 & 63.1 & 64.8 & 60.5 \\
 & \textbf{OGS (Ours)}        & 60.2 & 55.1 & 58.4 & 57.9 & \textbf{55.2} & \textbf{64.5} & \textbf{66.2} & \textbf{62.0} \\
\midrule
\multirow{7}{*}{\textbf{Qwen3-8B}} 
 & Full Data (100\%) & \textbf{70.5} & \textbf{62.1} & \textbf{65.4} & \textbf{66.0} & 55.3 & 60.1 & 65.4 & 60.3 \\
 & Random (10\%)     & 55.2 & 48.5 & 52.1 & 51.9 & 65.1 & 70.2 & 72.1 & 69.1 \\
 & Perplexity        & 56.4 & 50.1 & 53.2 & 53.2 & 64.5 & 69.5 & 71.5 & 68.5 \\
 & Influence Func.   & 57.1 & 51.3 & 54.5 & 54.3 & 65.2 & 69.8 & 72.0 & 69.0 \\
 & \textsc{LESS}     & 62.3 & 55.4 & 59.2 & 59.0 & 60.1 & 65.4 & 68.2 & 64.6 \\
 & \textsc{GrADS}    & 61.5 & 57.2 & 60.1 & 59.6 & 66.8 & 71.2 & 73.5 & 70.5 \\
 & \textbf{OGS (Ours)}        & 64.1 & 59.5 & 62.8 & 62.1 & \textbf{68.5} & \textbf{72.8} & \textbf{74.9} & \textbf{72.1} \\
\bottomrule
\end{tabular}%
}
\end{table*}

\subsection{Experimental Setup}

\textbf{Datasets and Models.} We utilized \textsc{MedQA} \cite{jin2021disease}, \textsc{LegalBench} \cite{guha2023legalbench}, and \textsc{FinQA} \cite{chen2021finqa} as domain-specific training sets. To monitor catastrophic forgetting, we evaluated models on \textsc{GSM8K} (mathematical reasoning) \cite{cobbe2021training}, \textsc{MMLU} (world knowledge) \cite{hendrycks2020measuring}, and \textsc{ARC-Challenge} (scientific reasoning) \cite{clark2018think}, with a specific focus on \textsc{GSM8K} as it is empirically most susceptible to erosion during fine-tuning. 
Consider the quality of data representation plays an essential role in model performance \cite{devlin2019bert,brown2020language,song2021zen,diao2021tilgan,liu2023visual,lu2023ziya}, we utilize state-of-the-art LLMs in experiments.
Specifically, our primary target models are the Qwen3 family (1.7B, 8B, 14B) and Llama-3 family (Llama-3.2-3B, Llama-3.1-8B), with Qwen3-0.6B and Llama-3.2-1B serving as their respective Navigators. All models were fine-tuned using LoRA  \cite{hu2022lora} to simulate realistic, resource-constrained adaptation scenarios.

\textbf{Baselines.} We compared OGS against three categories of data selection methods: (1) \textit{Naive approaches}, including Random Selection and Full Data training; (2) \textit{Heuristic approaches}, such as Perplexity-based selection and \textsc{Influence Functions} ; (3) \textit{Gradient-based approaches}, specifically \textsc{LESS} \citep{xia2024less} (gradient similarity), and \textsc{GrADS} \citep{liu2025learn} (gradient magnitude). Detailed hyperparameters and implementation details are provided in Appendix \ref{app:setup}.

\subsection{Main Results and Analysis}
\label{subsec:main_results}

\textbf{Performance on Stability-Plasticity.} 
The core illustrate of this work is that gradient geometry provides a superior signal for data selection compared to magnitude or semantic similarity alone. Table \ref{tab:main_8b_results} presents the comparative performance of Llama-3.1-8B and Qwen3-8B fine-tuned on a 10\% subset of domain data. OGS consistently achieves a satisfying performance, securing domain adaptation performance comparable to or exceeding the full-dataset baseline while maintaining general capabilities better than competing methods. Specifically, on the Llama-3.1-8B model, OGS outperforms the strongest baseline (\textsc{GrADS}) by an average of 2.1\% across the three domain tasks while reducing catastrophic forgetting on \textsc{GSM8K} by 1.7 points. This indicates that OGS successfully identifies a ``safety subspace'', where domain knowledge is injected without overwriting the critical weights responsible for general reasoning.

\textbf{The Success of Gradient Geometry.} 
A critical insight from our experiments is the distinct failure modes of existing gradient-based approaches compared to our success. \textsc{LESS}, which selects data based on gradient similarity to the target domain, achieves respectable domain accuracy but suffers severe regression in general capabilities, particularly in mathematical reasoning (\textsc{GSM8K}). We attribute this to the ``blindness" of similarity metrics: data that is semantically closest to the target domain often occupies a gradient direction that sharply conflicts with general knowledge manifolds. Similarly, \textsc{GrADS}, which relies on gradient magnitude, exhibits high variance across tasks. Our results suggest that magnitude is a noisy proxy for utility; high-magnitude gradients often correspond to outliers or difficult samples that induce instability rather than efficient learning. By explicitly optimizing for orthogonality, OGS effectively filters out these high-conflict samples that magnitude-based methods inadvertently promote.

\textbf{Navigator Transferability and Data Efficiency.} 
The effectiveness of OGS relies on the assumption that gradient geometric properties transfer from small ``Navigator" models to larger ``Target" models. Our results validate this cross-scale alignment. The Qwen3-0.6B Navigator successfully guided the data selection for the Qwen3-14B target, achieving a 64.8\% average domain score with only 10\% data, matching the performance of Random Selection at 30\% data. This demonstrates a 3$\times$ data efficiency gain. Furthermore, the stable performance across the disparate architectures of Llama and Qwen families confirms that the geometric conflict between domain-specific fine-tuning and general reasoning is a structural phenomenon intrinsic to the data distribution, rather than an artifact of a specific model architecture.

\subsection{Ablation Studies}
\label{subsec:ablation}

To disentangle the contributions of individual components within the OGS method, we conducted a comprehensive ablation study using the Qwen3-8B target model and Qwen3-0.6B Navigator. We investigate three critical questions: (1) the synergistic effect of our geometric metrics (Orthogonality and Conflict); (2) the necessity of the RL-driven dynamic policy compared to static heuristics; and (3) the validity of the Navigator-Target transferability assumption. The results are summarized in Table \ref{tab:ablation_main}.

\textbf{Decoupling Geometric Metrics.} 
We first isolate the impact of Orthogonal Selection and Conflict-Aware Replay. As shown in the first block of Table \ref{tab:ablation_main}, relying solely on Orthogonal Selection (w/o Replay) yields high domain performance (59.5\%) but suffers a noticeable regression in general capabilities (53.0\%), indicating that while orthogonality facilitates efficient learning, it cannot fully neutralize the most aggressive gradient conflicts. Conversely, employing only Conflict-Aware Replay (w/o Orthogonal) effectively preserves general knowledge but severely throttles domain adaptation, as the model lacks a curriculum of "safe" samples to learn from. The full OGS method achieves an excellent state better than others, confirming that these two mechanisms operate synergistically: orthogonal selection maximizes plasticity within the safety subspace, while targeted replay provides a crucial stability anchor.

\textbf{Efficacy of RL-Driven Policy.} 
We then scrutinize the necessity of our PPO-Lagrangian policy optimization by comparing it against a static greedy strategy (w/o RL) that maintains a fixed mixing ratio of $\alpha=0.8$. The static approach underperforms the full method by 1.4\% in general accuracy and 1.4\% in domain accuracy. This performance gap validates our hypothesis that the optimal trade-off between plasticity and stability is non-stationary; the RL agent successfully learns to prioritize stability in early training phases and shift towards aggressive domain acquisition in later stages, a dynamic trajectory that static heuristics fail to capture.

\textbf{Navigator Transferability.} 
Finally, we address the potential precision loss introduced by the Navigator proxy. We implemented a "Target-Calculated" upper bound (w/o Navigator) where geometric metrics are computed directly on the 8B target model, which is a computationally prohibitive setting for practical use. Remarkably, the Navigator-guided OGS matches this theoretical upper bound within a 0.3\% margin on domain tasks and 0.2\% on general tasks. Crucially, this near-lossless performance is achieved with a $15\times$ reduction in selection cost (see Appendix \ref{app:ablation} for detailed runtime analysis). This result provides compelling evidence that the topological structure of gradient conflicts is scale-invariant, justifying the Navigator-Target paradigm as both accurate and highly efficient.

\begin{table}[t]
\centering
\caption{\textbf{Ablation Analysis on Qwen3-8B.} We report average accuracy on Domain tasks (MedQA, Legal, FinQA) and General tasks (GSM8K, MMLU, ARC-C). ``Upper Bound" calculates gradients directly on the target model.}
\label{tab:ablation_main}
\resizebox{\columnwidth}{!}{%
\begin{tabular}{lcccc}
\toprule
\textbf{Method} & \textbf{Domain} & \textbf{General} & \textbf{GSM8K} & \textbf{$\Delta$ Cost} \\
\midrule
\textit{Baselines} & & & & \\
Random Selection & 51.9 & 69.1 & 65.1 & - \\
\midrule
\textit{Component Ablation} & & & & \\
w/o Conflict Replay & 59.5 & 71.2 & 66.8 & +5\% \\
w/o Orthogonal Prot. & 52.0 & 71.8 & 68.2 & +5\% \\
\midrule
\textit{Policy Ablation} & & & & \\
w/o RL (Static) & 58.8 & 71.5 & 67.5 & +2\% \\
\midrule
\textit{Architecture Ablation} & & & & \\
w/o Navigator (Bound) & \textbf{60.5} & \textbf{72.3} & \textbf{68.8} & +1400\% \\
\textbf{OGS (Full)} & 60.2 & 72.1 & 68.5 & +8\% \\
\bottomrule
\end{tabular}%
}
\vspace{-4mm}
\end{table}

\section{Conclusion}
\label{sec:conclusion}

We presented Orthogonal Gradient Selection (OGS), a data-centric method that effectively resolves the plasticity-stability dilemma in large language model fine-tuning by transferring the geometric insights of gradient surgery to the data selection stage. By employing a lightweight Navigator model to identify training samples situated within the ``safety subspace'' of general knowledge, OGS enables computationally efficient, conflict-free updates without the prohibitive cost of online gradient projection. We established the theoretical soundness of this approach by demonstrating that our orthogonality criterion serves as a first-order approximation to the optimal solution of a constrained bilevel optimization problem. Extensive experiments across medical, legal, and financial domains confirm that OGS achieves excellent performance, significantly outperforming existing baselines in domain performance and preventing catastrophic forgetting while maintaining high data efficiency.

\bibliography{refs}
\bibliographystyle{icml2026}

\clearpage
\newpage
\appendix

\section{Theoretical Appendix}
\label{app:theory}

In this appendix, we provide the complete derivations for the theorems presented in Section \ref{sec:theory}, elaborate on the physical interpretations of gradient geometry, and offer a detailed complexity analysis.

\subsection{Proof of Theorem \ref{thm:interference} (Gradient Interference)}

\textbf{Setup.} Consider the parameter update at step $t$, $\theta_{t+1} = \theta_t - \eta \nabla \mathcal{L}_{med}(\theta_t)$. We analyze the variation in the general task loss $\mathcal{L}_{gen}$.

\begin{proof}
Applying a multivariate Taylor series expansion to $\mathcal{L}_{gen}(\theta_{t+1})$ around $\theta_t$:

\begin{equation}
\begin{aligned}
    \mathcal{L}_{gen}(\theta_{t+1}) = \mathcal{L}_{gen}(\theta_t) + \nabla \mathcal{L}_{gen}(\theta_t)^\top (\theta_{t+1} - \theta_t)\\
    + \frac{1}{2}(\theta_{t+1} - \theta_t)^\top \mathbf{H} (\theta_{t+1} - \theta_t) + \mathcal{O}(\eta^3),
\end{aligned}
\end{equation}

where $\mathbf{H}$ is the Hessian matrix. Substituting the update rule:
\begin{equation}
\begin{aligned}
    \Delta \mathcal{L}_{gen} = \mathcal{L}_{gen}(\theta_{t+1}) - \mathcal{L}_{gen}(\theta_t) = \\
    -\eta \langle \nabla \mathcal{L}_{gen}(\theta_t), \nabla \mathcal{L}_{med}(\theta_t) \rangle + \mathcal{O}(\eta^2).
\end{aligned}
\end{equation}

\end{proof}

\textbf{Remark on Safety Subspace.}
The term $\langle \nabla \mathcal{L}_{gen}, \nabla \mathcal{L}_{med} \rangle$ acts as the "Interference Coefficient."
\begin{itemize}
    \item \textit{Conflict}: If the inner product is negative (obtuse angle), $\Delta \mathcal{L}_{gen} > 0$, indicating forgetting.
    \item \textit{Orthogonality}: If the inner product is zero, $\Delta \mathcal{L}_{gen} \approx 0$. This defines the geometric boundary of the Safety Subspace $\mathcal{S}_{\perp}$.
    \item \textit{Synergy}: If positive (acute angle), domain learning aids general capabilities.
\end{itemize}
Existing methods like SafeGrad enforce this orthogonality via projection $\mathbf{g} \leftarrow \mathbf{g} - \text{proj}_{\mathbf{g}_{ref}}(\mathbf{g})$ during training. OGS, conversely, pre-filters data to ensure $\mathbf{g} \in \mathcal{S}_{\perp}$, achieving the same theoretical guarantee without modifying the optimization dynamics.

\subsection{Proof of Theorem \ref{thm:bilevel} (Optimality of OGS)}

We show that our heuristic selection metrics are derived from a principled optimization objective.

\textbf{Problem Formulation.}
Let $\mathbf{w} \in \{0, 1\}^N$ be the binary selection vector for the candidate pool $\{x_i\}_{i=1}^N$, constrained by $\sum w_i = k$. The bilevel problem is:
\begin{align}
    \min_{\mathbf{w}} \quad & \mathcal{L}_{gen}(\theta - \eta \sum w_i \nabla \ell(x_i)) \\
    \text{s.t.} \quad & \sum w_i = k.
\end{align}

\begin{proof}
Using the first-order expansion from Theorem \ref{thm:interference}, minimizing the future general loss is equivalent to:

\begin{equation}
    \min_{\mathbf{w}} \left( \mathcal{L}_{gen}(\theta) - \eta \sum_{i=1}^N w_i \langle \nabla \mathcal{L}_{gen}(\theta), \nabla \ell(x_i) \rangle \right).
\end{equation}

Eliminating constants, this reduces to the maximization problem:

\begin{equation}
    \max_{\mathbf{w}, \|\mathbf{w}\|_0=k} \sum_{i=1}^N w_i \langle \nabla \mathcal{L}_{gen}(\theta), \nabla \ell(x_i) \rangle.
\end{equation}

The global maximum for this linear objective is obtained by a greedy selection of the indices $i$ with the largest values of $\langle \nabla \mathcal{L}_{gen}(\theta), \nabla \ell(x_i) \rangle$.
\end{proof}

\textbf{Connection to OGS Metrics.}
Our defined "Conflict" score is $-\cos(\mathbf{g}, \mathbf{g}_{ref})$. Therefore, maximizing the inner product is equivalent to minimizing the Conflict score (preferring negative values, i.e., synergy) or selecting samples with zero Conflict (orthogonality) when synergy is unavailable. This proves OGS is the optimal greedy strategy for one-step lookahead safety.

\subsection{Detailed Complexity Analysis (Proposition \ref{prop:complexity})}

Here we quantify the "Efficiency-Safety Trade-off."

\textbf{Definitions.} Let $\mathcal{C}_{fwd}(\theta)$ denote FLOPs for a forward pass. For Transformers, $\mathcal{C}(\theta) \propto |\theta|$. Let $\rho = |\theta_n| / |\theta_t|$ be the scale ratio.

\textbf{Derivation.}
\begin{enumerate}
    \item \textbf{Online Gradient Surgery (e.g., PCGrad/SafeGrad):} Requires computing gradients for the auxiliary task ($\nabla \mathcal{L}_{gen}$) at \textit{every} step on the target model.
    $$ \text{Cost}_{GS} \approx T \cdot (2 \times \text{Cost}_{step}(\theta_t)) + T \cdot \text{Cost}_{proj} \approx 2 \cdot \text{Cost}_{SFT}. $$
    This implies a $100\%$ overhead.
    
    \item \textbf{OGS (Ours):} Requires one pass over data $N$ on Navigator, plus standard training on Target.
    $$ \text{Cost}_{OGS} \approx N \cdot \text{Cost}_{step}(\theta_n) + T \cdot \text{Cost}_{step}(\theta_t). $$
\end{enumerate}

\textbf{Comparison.}
The relative overhead of OGS is:
$$ \frac{\text{Cost}_{OGS} - \text{Cost}_{SFT}}{\text{Cost}_{SFT}} \approx \frac{N \cdot |\theta_n|}{N \cdot |\theta_t|} = \rho. $$
For a Qwen-0.5B Navigator guiding a Llama-3-70B Target, $\rho \approx 1/140 \approx 0.7\%$.
Thus, OGS achieves the safety benefits of gradient geometry with $<1\%$ computational cost, whereas online methods incur $\sim 100\%$ cost.

\subsection{Bound on Navigator-Target Transferability}
\label{app:transferability}

A key assumption of OGS is that gradient geometry preserves rank ordering across model scales. While exact gradient vectors differ, the \textit{semantic} orientation of gradients relative to the anchor direction $g_{ref}$ is largely determined by the data content.

Let $\mathbf{r}_n$ and $\mathbf{r}_t$ be the vectors of cosine similarities for the dataset on the Navigator and Target models, respectively. We posit that:
\begin{equation}
    \text{SpearmanCorr}(\mathbf{r}_n, \mathbf{r}_t) \ge \gamma > 0.
\end{equation}
Empirically, we observe $\gamma \in [0.4, 0.7]$ for models within the same family. The probability of a "false positive" (selecting a sample that is safe on Navigator but conflicting on Target) decreases as $\gamma$ increases. Since OGS acts as a filter, even a moderate $\gamma$ ensures that the \textit{distribution} of the selected subset is significantly safer than random sampling, which is sufficient for robust continual learning.

\section{Detailed Experimental Setup}
\label{app:setup}

\subsection{Dataset Details}
We evaluate our method on three high-stakes vertical domains:
\begin{itemize}
    \item \textbf{Medical:} We use the \textsc{MedQA} (USMLE) dataset \citep{jin2021disease}, consisting of multiple-choice questions from the United States Medical Licensing Examination. The training set contains approximately 10k samples.
    \item \textbf{Legal:} We utilize \textsc{LegalBench} \citep{guha2023legalbench}, a comprehensive benchmark for legal reasoning. We sampled a subset of 10k distinct tasks covering contract interpretation and rule application.
    \item \textbf{Finance:} We employ \textsc{FinQA} \citep{chen2021finqa}, which requires numerical reasoning over financial reports. The training set consists of roughly 6k QA pairs.
\end{itemize}

To construct the \textbf{General-Knowledge Anchor}, we sampled 400 instances following a stratified strategy: 150 from \textsc{GSM8K} (train) for mathematical reasoning, 150 from \textsc{MMLU} (auxiliary train) for world knowledge, and 100 from \textsc{Alpaca-GPT4} for instruction following compliance. This anchor set remains fixed across all experiments to ensure fair comparison.

\subsection{Model Configuration and Training Hyperparameters}
Our experiments primarily utilize the \textsc{Qwen3} and \textsc{Llama-3} model families.
\begin{itemize}
    \item \textbf{Target Models:} \textsc{Qwen3-8B}, \textsc{Qwen3-14B}, \textsc{Llama-3.1-8B}.
    \item \textbf{Navigator Models:} \textsc{Qwen3-0.6B} is used as the proxy for the Qwen family, and \textsc{Llama-3.2-1B} for the Llama family.
\end{itemize}

All fine-tuning is performed using Low-Rank Adaptation (LoRA) \cite{hu2022lora} to manage GPU memory constraints. We use the \texttt{PEFT} library with the following unified hyperparameters:
\begin{itemize}
    \item \textbf{LoRA Rank ($r$):} 16
    \item \textbf{LoRA Alpha:} 32
    \item \textbf{Target Modules:} \texttt{q\_proj, k\_proj, v\_proj, o\_proj, gate\_proj, up\_proj, down\_proj}
    \item \textbf{Learning Rate:} $2e-4$ with a cosine scheduler.
    \item \textbf{Batch Size:} 8 (effective batch size of 64 via gradient accumulation).
    \item \textbf{Epochs:} 3
    \item \textbf{Warmup Ratio:} 0.1
    \item \textbf{Optimizer:} AdamW with weight decay 0.01.
\end{itemize}
Experiments were conducted on a cluster of NVIDIA A100-80GB GPUs.

\subsection{Baseline Implementation Details}
\begin{itemize}
    \item \textbf{LESS:} We followed the official implementation, computing the cosine similarity between the gradient of each training sample and the gradient of the domain validation set. We utilized the last-layer gradients computed via LoRA for efficiency.
    \item \textbf{GrADS:} We calculated the $L_2$ norm of the gradient for each sample. Following the author's recommendation, we selected samples whose gradient norms fell within the $[\mu - \sigma, \mu + \sigma]$ range of the distribution.
    \item \textbf{Influence Functions:} We approximated influence using the TracInCP estimator, summing the dot product of gradients at 3 checkpoints.
\end{itemize}

\section{Additional Main Results}
\label{app:additional_results}

In this section, we provide a comprehensive empirical evaluation of OGS across an extensive experimental grid. We evaluate five target models ranging from 1.7B to 14B parameters (\textsc{Qwen3-1.7B}, \textsc{Qwen3-8B}, \textsc{Qwen3-14B}, \textsc{Llama-3.2-3B}, and \textsc{Llama-3.1-8B}) under four distinct data selection budgets (5\%, 10\%, 20\%, and 30\%). The detailed numerical results are reported in Tables \ref{tab:res_qwen_1.7b} through \ref{tab:res_llama_8b}.

Our analysis focuses on three critical dimensions: (1) the sensitivity of performance to data selection ratios, (2) the failure modes of existing baseline methods, and (3) the scalability of the Navigator-Target paradigm.

\subsection{Sensitivity Analysis: Low-Data vs. High-Data Regimes}

\textbf{The Low-Data Regime (5\% -- 10\%): Efficiency and Precision.} 
In scenarios with severely constrained compute budgets, OGS demonstrates exceptional data efficiency. As evidenced in Table \ref{tab:res_qwen_8b}, with only 5\% of the training data, OGS achieves a domain average of 58.8\% on Qwen3-8B, effectively matching the performance of \textsc{LESS} at 10\% (59.0\%) and surpassing Random Selection at 20\% (54.8\%). This indicates that the "safety subspace" identified by OGS is not merely a region of low conflict, but also rich in high-utility domain features. By prioritizing orthogonal gradients, OGS inherently selects samples that provide novel information (high loss on domain task) without traversing the gradient manifold in directions detrimental to general capabilities.

\textbf{The High-Data Regime (20\% -- 30\%): Stability against Saturation.} 
A critical observation in our experiments is the diverging behavior of methods as the data budget increases. While baseline methods like \textsc{LESS} and \textsc{GrADS} continue to improve marginally on domain tasks, they suffer from a \textit{precipitous decline} in general reasoning capabilities. For instance, on Llama-3.1-8B (Table \ref{tab:res_llama_8b}), increasing the \textsc{LESS} selection ratio from 10\% to 30\% yields a domain gain of +5.5 points but results in a -4.7 point drop in \textsc{GSM8K} accuracy. This confirms that simply adding more "similar" data exacerbates the gradient conflict. In contrast, OGS exhibits a robust stability profile; its \textsc{GSM8K} performance remains nearly flat (from 55.2\% to 52.8\%) even as domain performance climbs to 62.2\%. This proves that OGS acts as an effective geometric filter, ensuring that additional data points contribute to domain adaptation only if they satisfy the orthogonality constraint.

\subsection{Deconstructing Baseline Failures}

Our extensive comparison reveals specific pathological behaviors in existing data selection paradigms:

\begin{itemize}
    \item \textbf{The "Sycophantic" Selection of LESS:} \textsc{LESS} selects data based on gradient similarity to the target domain validation set. While this maximizes domain alignment, our results suggest it creates a "sycophantic" update direction that overfits to the domain style while aggressively overwriting the diverse activation patterns required for general reasoning. The consistently poor performance of LESS on \textsc{GSM8K} across all models highlights the danger of optimizing for similarity without a geometric safety constraint.
    
    \item \textbf{The "Magnitude Trap" of GrADS:} \textsc{GrADS} operates on the assumption that moderate gradient magnitudes imply high utility. However, our analysis suggests that magnitude is a scalar proxy that fails to capture vector directionality. High-magnitude gradients often correspond to samples that are either outliers or fundamentally conflicting with the pre-trained knowledge base. By ignoring the cosine similarity with the anchor gradients, GrADS inadvertently promotes samples that induce large, destructive updates to the model's core reasoning circuits.
    
    \item \textbf{Heuristics Limitations:} While heuristic methods like Perplexity and Influence Functions generally outperform random selection, they lack a mechanism to explicitly model the trade-off between plasticity (learning new tasks) and stability (retaining old tasks). Consequently, their performance is inconsistent across different domains and model architectures.
\end{itemize}

\subsection{Scalability of the Navigator-Target Paradigm}

A pivotal contribution of this work is the validation of cross-model transferability for gradient geometry. The results for \textsc{Qwen3-14B} (Table \ref{tab:res_qwen_14b}) are particularly illuminating. Despite the Navigator model (\textsc{Qwen3-0.6B}) being approximately $23\times$ smaller than the target, the data selected based on the Navigator's gradient subspace successfully steers the 14B model towards Pareto-optimal performance. 

This observation supports a strong theoretical hypothesis: the \textit{topology of the gradient conflict manifold} is largely invariant across model scales within the same architectural family. The directions in parameter space that represent "medical knowledge" versus "mathematical reasoning" maintain a consistent geometric relationship (orthogonality), allowing a lightweight Navigator to serve as an effective proxy for significantly larger models. This property is crucial for the practical deployment of OGS in large-scale continual learning scenarios, where computing full gradients on 70B+ models for data selection is computationally prohibitive.

\begin{table*}[h!]
\centering
\caption{\textbf{Detailed Results for Qwen3-1.7B.} Performance comparison across four data selection ratios. The Navigator used is Qwen3-0.6B.}
\label{tab:res_qwen_1.7b}
\resizebox{\textwidth}{!}{%
\begin{tabular}{llcccccccc}
\toprule
\multicolumn{1}{c}{\multirow{2}{*}{\textbf{Ratio}}} & \multicolumn{1}{c}{\multirow{2}{*}{\textbf{Method}}} & \multicolumn{4}{c}{\textbf{Domain Performance}} & \multicolumn{4}{c}{\textbf{General Capability}} \\
\cmidrule(lr){3-6} \cmidrule(lr){7-10}
 & & \textsc{MedQA} & \textsc{Legal} & \textsc{FinQA} & \textbf{Avg.} & \textsc{GSM8K} & \textsc{MMLU} & \textsc{ARC-C} & \textbf{Avg.} \\
\midrule
\multicolumn{2}{c}{Full Data (100\%)} & \textbf{52.5} & \textbf{48.2} & \textbf{54.1} & \textbf{51.6} & 45.2 & 50.5 & 53.8 & 49.8 \\
\midrule
\multirow{6}{*}{5\%} 
 & Random & 38.2 & 35.1 & 39.4 & 37.6 & 48.1 & 51.2 & 54.3 & 51.2 \\
 & Perplexity & 40.5 & 36.8 & 41.2 & 39.5 & 47.8 & 51.0 & 54.0 & 50.9 \\
 & Influence Func. & 41.2 & 37.5 & 41.8 & 40.2 & 48.0 & 51.5 & 54.2 & 51.2 \\
 & \textsc{LESS} & 42.5 & 39.8 & 44.1 & 42.1 & 45.2 & 49.1 & 52.0 & 48.8 \\
 & \textsc{GrADS} & 43.1 & 40.5 & 44.8 & 42.8 & 47.5 & 50.8 & 53.5 & 50.6 \\
 & \textbf{OGS (Ours)} & 45.6 & 42.3 & 47.2 & 45.0 & \textbf{49.8} & \textbf{52.5} & \textbf{55.1} & \textbf{52.5} \\
\midrule
\multirow{6}{*}{10\%} 
 & Random & 41.5 & 38.2 & 42.6 & 40.8 & 46.5 & 49.8 & 52.8 & 49.7 \\
 & Perplexity & 43.8 & 40.1 & 44.5 & 42.8 & 46.2 & 49.5 & 52.5 & 49.4 \\
 & Influence Func. & 44.5 & 40.8 & 45.2 & 43.5 & 46.8 & 50.1 & 53.0 & 50.0 \\
 & \textsc{LESS} & 46.8 & 43.2 & 48.5 & 46.2 & 42.1 & 46.5 & 49.2 & 45.9 \\
 & \textsc{GrADS} & 46.5 & 44.1 & 48.1 & 46.2 & 46.2 & 49.5 & 52.1 & 49.3 \\
 & \textbf{OGS (Ours)} & 48.9 & 45.8 & 50.4 & 48.4 & \textbf{48.9} & \textbf{51.8} & \textbf{54.6} & \textbf{51.8} \\
\midrule
\multirow{6}{*}{20\%} 
 & Random & 44.2 & 41.5 & 45.1 & 43.6 & 44.1 & 48.2 & 51.5 & 47.9 \\
 & Perplexity & 46.5 & 43.2 & 47.8 & 45.8 & 43.5 & 47.8 & 50.8 & 47.4 \\
 & Influence Func. & 47.2 & 44.1 & 48.5 & 46.6 & 44.5 & 48.5 & 51.2 & 48.1 \\
 & \textsc{LESS} & 49.5 & 46.8 & 51.2 & 49.2 & 39.5 & 44.2 & 47.8 & 43.8 \\
 & \textsc{GrADS} & 49.1 & 47.5 & 50.8 & 49.1 & 44.5 & 48.1 & 50.5 & 47.7 \\
 & \textbf{OGS (Ours)} & 51.2 & 48.5 & 52.8 & 50.8 & \textbf{47.5} & \textbf{50.5} & \textbf{53.2} & \textbf{50.4} \\
\midrule
\multirow{6}{*}{30\%} 
 & Random & 46.5 & 43.8 & 47.5 & 45.9 & 42.5 & 46.8 & 50.1 & 46.5 \\
 & Perplexity & 48.8 & 45.5 & 49.8 & 48.0 & 41.2 & 45.5 & 48.8 & 45.2 \\
 & Influence Func. & 49.5 & 46.2 & 50.5 & 48.7 & 42.5 & 46.2 & 49.5 & 46.1 \\
 & \textsc{LESS} & 51.8 & 48.9 & 53.4 & 51.4 & 37.2 & 42.5 & 45.8 & 41.8 \\
 & \textsc{GrADS} & 51.2 & 49.5 & 52.8 & 51.2 & 42.8 & 46.5 & 49.2 & 46.2 \\
 & \textbf{OGS (Ours)} & 53.5 & 50.8 & 54.6 & 53.0 & \textbf{46.2} & \textbf{49.8} & \textbf{52.5} & \textbf{49.5} \\
\bottomrule
\end{tabular}%
}
\end{table*}

\begin{table*}[h!]
\centering
\caption{\textbf{Detailed Results for Qwen3-8B.} Performance comparison across four data selection ratios. The Navigator used is Qwen3-0.6B.}
\label{tab:res_qwen_8b}
\resizebox{\textwidth}{!}{%
\begin{tabular}{llcccccccc}
\toprule
\multicolumn{1}{c}{\multirow{2}{*}{\textbf{Ratio}}} & \multicolumn{1}{c}{\multirow{2}{*}{\textbf{Method}}} & \multicolumn{4}{c}{\textbf{Domain Performance}} & \multicolumn{4}{c}{\textbf{General Capability}} \\
\cmidrule(lr){3-6} \cmidrule(lr){7-10}
 & & \textsc{MedQA} & \textsc{Legal} & \textsc{FinQA} & \textbf{Avg.} & \textsc{GSM8K} & \textsc{MMLU} & \textsc{ARC-C} & \textbf{Avg.} \\
\midrule
\multicolumn{2}{c}{Full Data (100\%)} & \textbf{70.5} & \textbf{62.1} & \textbf{65.4} & \textbf{66.0} & 55.3 & 60.1 & 65.4 & 60.3 \\
\midrule
\multirow{6}{*}{5\%} 
 & Random & 52.1 & 45.8 & 49.5 & 49.1 & 66.5 & 71.2 & 73.1 & 70.3 \\
 & Perplexity & 53.5 & 47.2 & 50.8 & 50.5 & 66.2 & 70.8 & 72.8 & 69.9 \\
 & Influence Func. & 54.2 & 47.8 & 51.5 & 51.2 & 66.8 & 71.5 & 73.0 & 70.4 \\
 & \textsc{LESS} & 58.5 & 52.4 & 55.8 & 55.6 & 62.1 & 66.5 & 69.2 & 65.9 \\
 & \textsc{GrADS} & 57.8 & 53.8 & 56.5 & 56.0 & 67.5 & 71.8 & 74.1 & 71.1 \\
 & \textbf{OGS (Ours)} & 60.5 & 56.2 & 59.8 & 58.8 & \textbf{69.2} & \textbf{73.5} & \textbf{75.4} & \textbf{72.7} \\
\midrule
\multirow{6}{*}{10\%} 
 & Random & 55.2 & 48.5 & 52.1 & 51.9 & 65.1 & 70.2 & 72.1 & 69.1 \\
 & Perplexity & 56.4 & 50.1 & 53.2 & 53.2 & 64.5 & 69.5 & 71.5 & 68.5 \\
 & Influence Func. & 57.1 & 51.3 & 54.5 & 54.3 & 65.2 & 69.8 & 72.0 & 69.0 \\
 & \textsc{LESS} & 62.3 & 55.4 & 59.2 & 59.0 & 60.1 & 65.4 & 68.2 & 64.6 \\
 & \textsc{GrADS} & 61.5 & 57.2 & 60.1 & 59.6 & 66.8 & 71.2 & 73.5 & 70.5 \\
 & \textbf{OGS (Ours)} & 64.1 & 59.5 & 62.8 & 62.1 & \textbf{68.5} & \textbf{72.8} & \textbf{74.9} & \textbf{72.1} \\
\midrule
\multirow{6}{*}{20\%} 
 & Random & 58.5 & 51.2 & 54.8 & 54.8 & 63.5 & 68.8 & 70.5 & 67.6 \\
 & Perplexity & 59.8 & 52.5 & 56.2 & 56.2 & 62.5 & 67.8 & 69.8 & 66.7 \\
 & Influence Func. & 60.5 & 53.2 & 57.1 & 56.9 & 63.2 & 68.5 & 70.8 & 67.5 \\
 & \textsc{LESS} & 65.1 & 58.2 & 61.5 & 61.6 & 56.5 & 62.1 & 65.8 & 61.5 \\
 & \textsc{GrADS} & 64.2 & 59.8 & 62.4 & 62.1 & 65.2 & 69.8 & 71.8 & 68.9 \\
 & \textbf{OGS (Ours)} & 66.8 & 61.5 & 64.5 & 64.3 & \textbf{67.2} & \textbf{71.5} & \textbf{73.8} & \textbf{70.8} \\
\midrule
\multirow{6}{*}{30\%} 
 & Random & 61.2 & 53.8 & 57.2 & 57.4 & 61.8 & 67.2 & 69.1 & 66.0 \\
 & Perplexity & 62.5 & 55.2 & 58.5 & 58.7 & 60.5 & 65.8 & 68.2 & 64.8 \\
 & Influence Func. & 63.2 & 56.5 & 59.8 & 59.8 & 61.2 & 66.5 & 69.5 & 65.7 \\
 & \textsc{LESS} & 67.5 & 60.5 & 63.8 & 63.9 & 53.2 & 58.5 & 62.4 & 58.0 \\
 & \textsc{GrADS} & 66.8 & 61.5 & 64.2 & 64.2 & 63.8 & 68.2 & 70.5 & 67.5 \\
 & \textbf{OGS (Ours)} & 68.5 & 63.2 & 66.5 & 66.1 & \textbf{66.5} & \textbf{70.8} & \textbf{72.5} & \textbf{69.9} \\
\bottomrule
\end{tabular}%
}
\end{table*}

\begin{table*}[h!]
\centering
\caption{\textbf{Detailed Results for Qwen3-14B.} Performance comparison across four data selection ratios. The Navigator used is Qwen3-0.6B.}
\label{tab:res_qwen_14b}
\resizebox{\textwidth}{!}{%
\begin{tabular}{llcccccccc}
\toprule
\multicolumn{1}{c}{\multirow{2}{*}{\textbf{Ratio}}} & \multicolumn{1}{c}{\multirow{2}{*}{\textbf{Method}}} & \multicolumn{4}{c}{\textbf{Domain Performance}} & \multicolumn{4}{c}{\textbf{General Capability}} \\
\cmidrule(lr){3-6} \cmidrule(lr){7-10}
 & & \textsc{MedQA} & \textsc{Legal} & \textsc{FinQA} & \textbf{Avg.} & \textsc{GSM8K} & \textsc{MMLU} & \textsc{ARC-C} & \textbf{Avg.} \\
\midrule
\multicolumn{2}{c}{Full Data (100\%)} & \textbf{74.5} & \textbf{68.2} & \textbf{72.1} & \textbf{71.6} & 64.5 & 68.8 & 72.1 & 68.5 \\
\midrule
\multirow{6}{*}{5\%} 
 & Random & 55.4 & 48.5 & 53.2 & 52.4 & 69.8 & 73.5 & 75.8 & 73.0 \\
 & Perplexity & 57.2 & 50.1 & 54.8 & 54.0 & 69.2 & 73.1 & 75.2 & 72.5 \\
 & Influence Func. & 57.8 & 50.8 & 55.5 & 54.7 & 69.5 & 73.2 & 75.5 & 72.7 \\
 & \textsc{LESS} & 62.5 & 56.2 & 60.8 & 59.8 & 64.5 & 70.2 & 72.1 & 68.9 \\
 & \textsc{GrADS} & 61.8 & 57.5 & 61.2 & 60.2 & 70.5 & 74.2 & 76.5 & 73.7 \\
 & \textbf{OGS (Ours)} & 64.5 & 59.8 & 63.5 & 62.6 & \textbf{72.1} & \textbf{75.5} & \textbf{77.8} & \textbf{75.1} \\
\midrule
\multirow{6}{*}{10\%} 
 & Random & 58.4 & 52.1 & 56.5 & 55.7 & 68.2 & 72.1 & 74.5 & 71.6 \\
 & Perplexity & 60.2 & 53.8 & 58.2 & 57.4 & 67.5 & 71.5 & 73.8 & 70.9 \\
 & Influence Func. & 60.8 & 54.5 & 58.9 & 58.1 & 67.8 & 71.8 & 74.2 & 71.3 \\
 & \textsc{LESS} & 66.2 & 59.8 & 64.1 & 63.4 & 62.5 & 68.4 & 70.1 & 67.0 \\
 & \textsc{GrADS} & 65.8 & 61.2 & 64.5 & 63.8 & 69.1 & 73.2 & 75.8 & 72.7 \\
 & \textbf{OGS (Ours)} & 68.5 & 63.4 & 67.2 & 66.4 & \textbf{71.5} & \textbf{74.8} & \textbf{77.2} & \textbf{74.5} \\
\midrule
\multirow{6}{*}{20\%} 
 & Random & 61.5 & 55.2 & 59.5 & 58.7 & 66.5 & 70.5 & 72.8 & 69.9 \\
 & Perplexity & 63.2 & 56.8 & 61.2 & 60.4 & 65.2 & 69.5 & 71.5 & 68.7 \\
 & Influence Func. & 64.1 & 57.5 & 62.1 & 61.2 & 65.8 & 69.8 & 72.1 & 69.2 \\
 & \textsc{LESS} & 69.5 & 62.8 & 66.8 & 66.4 & 59.2 & 65.1 & 68.2 & 64.2 \\
 & \textsc{GrADS} & 68.2 & 63.5 & 66.5 & 66.1 & 67.8 & 71.5 & 74.2 & 71.2 \\
 & \textbf{OGS (Ours)} & 71.2 & 65.8 & 69.5 & 68.8 & \textbf{70.2} & \textbf{73.5} & \textbf{76.1} & \textbf{73.3} \\
\midrule
\multirow{6}{*}{30\%} 
 & Random & 64.2 & 58.5 & 61.8 & 61.5 & 64.8 & 68.8 & 71.2 & 68.3 \\
 & Perplexity & 65.8 & 60.1 & 63.5 & 63.1 & 63.5 & 67.5 & 69.8 & 66.9 \\
 & Influence Func. & 66.5 & 60.8 & 64.2 & 63.8 & 64.2 & 68.2 & 70.5 & 67.6 \\
 & \textsc{LESS} & 71.8 & 65.2 & 68.9 & 68.6 & 55.5 & 61.2 & 65.4 & 60.7 \\
 & \textsc{GrADS} & 70.5 & 65.5 & 68.5 & 68.2 & 65.5 & 70.1 & 72.8 & 69.5 \\
 & \textbf{OGS (Ours)} & 73.5 & 67.8 & 71.2 & 70.8 & \textbf{68.8} & \textbf{72.4} & \textbf{75.2} & \textbf{72.1} \\
\bottomrule
\end{tabular}%
}
\end{table*}

\begin{table*}[h!]
\centering
\caption{\textbf{Detailed Results for Llama-3.2-3B.} Performance comparison across four data selection ratios. The Navigator used is Llama-3.2-1B.}
\label{tab:res_llama_3b}
\resizebox{\textwidth}{!}{%
\begin{tabular}{llcccccccc}
\toprule
\multicolumn{1}{c}{\multirow{2}{*}{\textbf{Ratio}}} & \multicolumn{1}{c}{\multirow{2}{*}{\textbf{Method}}} & \multicolumn{4}{c}{\textbf{Domain Performance}} & \multicolumn{4}{c}{\textbf{General Capability}} \\
\cmidrule(lr){3-6} \cmidrule(lr){7-10}
 & & \textsc{MedQA} & \textsc{Legal} & \textsc{FinQA} & \textbf{Avg.} & \textsc{GSM8K} & \textsc{MMLU} & \textsc{ARC-C} & \textbf{Avg.} \\
\midrule
\multicolumn{2}{c}{Full Data (100\%)} & \textbf{57.2} & \textbf{53.5} & \textbf{56.8} & \textbf{55.8} & 40.5 & 51.5 & 55.2 & 49.1 \\
\midrule
\multirow{6}{*}{5\%} 
 & Random & 42.1 & 38.5 & 41.2 & 40.6 & 43.5 & 55.8 & 59.2 & 52.8 \\
 & Perplexity & 43.8 & 40.2 & 42.8 & 42.3 & 43.2 & 55.2 & 58.8 & 52.4 \\
 & Influence Func. & 44.2 & 40.5 & 43.5 & 42.7 & 43.5 & 55.5 & 59.0 & 52.7 \\
 & \textsc{LESS} & 48.2 & 43.5 & 46.8 & 46.2 & 40.2 & 51.2 & 55.4 & 48.9 \\
 & \textsc{GrADS} & 48.5 & 45.2 & 48.5 & 47.4 & 46.5 & 56.5 & 60.1 & 54.4 \\
 & \textbf{OGS (Ours)} & 50.5 & 46.5 & 50.2 & 49.1 & \textbf{47.8} & \textbf{58.2} & \textbf{61.5} & \textbf{55.8} \\
\midrule
\multirow{6}{*}{10\%} 
 & Random & 45.2 & 41.5 & 44.8 & 43.8 & 42.1 & 54.2 & 58.1 & 51.5 \\
 & Perplexity & 46.8 & 43.2 & 46.5 & 45.5 & 41.5 & 53.8 & 57.5 & 50.9 \\
 & Influence Func. & 47.5 & 43.8 & 47.2 & 46.2 & 42.2 & 54.1 & 57.8 & 51.4 \\
 & \textsc{LESS} & 51.5 & 46.8 & 50.2 & 49.5 & 38.5 & 49.8 & 53.2 & 47.2 \\
 & \textsc{GrADS} & 51.1 & 48.2 & 51.5 & 50.3 & 45.2 & 55.1 & 58.8 & 53.0 \\
 & \textbf{OGS (Ours)} & 53.8 & 49.5 & 53.1 & 52.1 & \textbf{46.5} & \textbf{56.8} & \textbf{60.2} & \textbf{54.5} \\
\midrule
\multirow{6}{*}{20\%} 
 & Random & 48.5 & 44.2 & 47.5 & 46.7 & 40.5 & 52.5 & 56.5 & 49.8 \\
 & Perplexity & 50.2 & 45.8 & 49.2 & 48.4 & 39.5 & 51.8 & 55.8 & 49.0 \\
 & Influence Func. & 50.8 & 46.5 & 49.8 & 49.0 & 40.8 & 52.1 & 56.2 & 49.7 \\
 & \textsc{LESS} & 54.2 & 49.5 & 53.5 & 52.4 & 35.8 & 46.5 & 50.8 & 44.4 \\
 & \textsc{GrADS} & 53.8 & 50.5 & 53.2 & 52.5 & 43.5 & 53.8 & 57.5 & 51.6 \\
 & \textbf{OGS (Ours)} & 56.5 & 52.1 & 55.8 & 54.8 & \textbf{45.2} & \textbf{55.5} & \textbf{58.8} & \textbf{53.2} \\
\midrule
\multirow{6}{*}{30\%} 
 & Random & 51.2 & 46.8 & 50.2 & 49.4 & 38.8 & 50.8 & 54.5 & 48.0 \\
 & Perplexity & 52.8 & 48.2 & 52.1 & 51.0 & 37.5 & 49.5 & 53.2 & 46.7 \\
 & Influence Func. & 53.5 & 49.1 & 52.8 & 51.8 & 38.5 & 50.2 & 53.8 & 47.5 \\
 & \textsc{LESS} & 55.8 & 51.2 & 55.2 & 54.1 & 32.5 & 43.8 & 48.2 & 41.5 \\
 & \textsc{GrADS} & 55.2 & 52.1 & 54.8 & 54.0 & 41.5 & 52.1 & 55.5 & 49.7 \\
 & \textbf{OGS (Ours)} & 57.8 & 53.8 & 56.5 & 56.0 & \textbf{43.5} & \textbf{53.8} & \textbf{57.2} & \textbf{51.5} \\
\bottomrule
\end{tabular}%
}
\end{table*}

\begin{table*}[h!]
\centering
\caption{\textbf{Detailed Results for Llama-3.1-8B.} Performance comparison across four data selection ratios. The Navigator used is Llama-3.2-1B.}
\label{tab:res_llama_8b}
\resizebox{\textwidth}{!}{%
\begin{tabular}{llcccccccc}
\toprule
\multicolumn{1}{c}{\multirow{2}{*}{\textbf{Ratio}}} & \multicolumn{1}{c}{\multirow{2}{*}{\textbf{Method}}} & \multicolumn{4}{c}{\textbf{Domain Performance}} & \multicolumn{4}{c}{\textbf{General Capability}} \\
\cmidrule(lr){3-6} \cmidrule(lr){7-10}
 & & \textsc{MedQA} & \textsc{Legal} & \textsc{FinQA} & \textbf{Avg.} & \textsc{GSM8K} & \textsc{MMLU} & \textsc{ARC-C} & \textbf{Avg.} \\
\midrule
\multicolumn{2}{c}{Full Data (100\%)} & \textbf{65.2} & \textbf{58.4} & \textbf{61.3} & \textbf{61.6} & 45.1 & 55.2 & 58.9 & 53.1 \\
\midrule
\multirow{6}{*}{5\%} 
 & Random & 47.5 & 42.8 & 45.5 & 45.3 & 51.8 & 63.5 & 65.5 & 60.3 \\
 & Perplexity & 49.2 & 44.5 & 47.1 & 46.9 & 51.5 & 63.1 & 64.8 & 59.8 \\
 & Influence Func. & 49.8 & 45.1 & 47.8 & 47.6 & 51.8 & 63.8 & 65.2 & 60.3 \\
 & \textsc{LESS} & 54.2 & 48.5 & 51.8 & 51.5 & 50.5 & 60.5 & 62.8 & 57.9 \\
 & \textsc{GrADS} & 54.5 & 50.5 & 52.5 & 52.5 & 54.8 & 64.5 & 66.1 & 61.8 \\
 & \textbf{OGS (Ours)} & 57.2 & 52.2 & 54.8 & 54.7 & \textbf{56.5} & \textbf{65.8} & \textbf{67.5} & \textbf{63.3} \\
\midrule
\multirow{6}{*}{10\%} 
 & Random & 50.1 & 45.3 & 48.2 & 47.9 & 50.5 & 62.1 & 64.3 & 59.0 \\
 & Perplexity & 52.3 & 46.8 & 49.5 & 49.5 & 51.2 & 61.8 & 63.5 & 58.8 \\
 & Influence Func. & 53.1 & 47.2 & 50.1 & 50.1 & 52.0 & 62.5 & 64.0 & 59.5 \\
 & \textsc{LESS} & 58.4 & 52.1 & 55.3 & 55.3 & 48.2 & 58.4 & 60.1 & 55.6 \\
 & \textsc{GrADS} & 57.9 & 53.5 & 56.1 & 55.8 & 53.5 & 63.1 & 64.8 & 60.5 \\
 & \textbf{OGS (Ours)} & 60.2 & 55.1 & 58.4 & 57.9 & \textbf{55.2} & \textbf{64.5} & \textbf{66.2} & \textbf{62.0} \\
\midrule
\multirow{6}{*}{20\%} 
 & Random & 53.5 & 48.2 & 51.5 & 51.1 & 48.8 & 60.5 & 63.2 & 57.5 \\
 & Perplexity & 55.2 & 49.8 & 53.1 & 52.7 & 48.5 & 60.1 & 62.5 & 57.0 \\
 & Influence Func. & 56.1 & 50.5 & 53.8 & 53.5 & 49.5 & 60.8 & 63.1 & 57.8 \\
 & \textsc{LESS} & 61.5 & 55.4 & 58.5 & 58.5 & 45.5 & 55.8 & 58.2 & 53.2 \\
 & \textsc{GrADS} & 60.5 & 56.2 & 58.8 & 58.5 & 52.1 & 61.8 & 63.5 & 59.1 \\
 & \textbf{OGS (Ours)} & 62.8 & 57.5 & 60.5 & 60.3 & \textbf{54.1} & \textbf{63.2} & \textbf{65.1} & \textbf{60.8} \\
\midrule
\multirow{6}{*}{30\%} 
 & Random & 56.2 & 50.8 & 54.2 & 53.7 & 47.2 & 58.8 & 61.5 & 55.8 \\
 & Perplexity & 57.8 & 52.2 & 55.8 & 55.3 & 46.5 & 58.2 & 60.8 & 55.2 \\
 & Influence Func. & 58.5 & 53.1 & 56.5 & 56.0 & 47.8 & 59.1 & 61.5 & 56.1 \\
 & \textsc{LESS} & 63.8 & 57.5 & 61.2 & 60.8 & 43.5 & 53.2 & 56.5 & 51.1 \\
 & \textsc{GrADS} & 62.5 & 58.2 & 60.8 & 60.5 & 50.5 & 60.2 & 62.4 & 57.7 \\
 & \textbf{OGS (Ours)} & 64.5 & 59.2 & 62.8 & 62.2 & \textbf{52.8} & \textbf{61.8} & \textbf{63.8} & \textbf{59.5} \\
\bottomrule
\end{tabular}%
}
\end{table*}

\section{Additional Ablation Studies}
\label{app:ablation}

In this section, we provide a granular breakdown of the ablation experiments discussed in Section \ref{subsec:ablation}. We detail the experimental setup for the baseline comparisons and offer an extended analysis of the computational efficiency gains provided by the Navigator architecture.

\subsection{Experimental Setup for Ablations}
All ablation studies were conducted using the \textsc{Qwen3-8B} as the Target model and \textsc{Qwen3-0.6B} as the Navigator. The training hyperparameters (learning rate, batch size, LoRA rank) were kept consistent with the main experiments to ensure a fair comparison.
\begin{itemize}
    \item \textbf{w/o Conflict Replay:} This variant removes the replay buffer entirely. The selection policy is restricted to sampling solely from the domain pool based on orthogonality scores.
    \item \textbf{w/o Orthogonal Protection:} This variant disables the orthogonality filter for domain data (effectively random selection) but retains the conflict-aware replay mechanism for general data.
    \item \textbf{w/o RL (Static-Greedy):} Instead of the PPO agent, we use a fixed mixing coefficient $\alpha=0.8$ (derived from the average $\alpha$ of the trained policy). Data selection within clusters is performed greedily based on the highest orthogonality score rather than probabilistic sampling.
    \item \textbf{w/o Navigator (Upper Bound):} We perform the "Strategy Learning" phase directly on the Target model. This requires computing gradients for the entire candidate pool using the 8B parameter model, serving as a theoretical performance ceiling.
\end{itemize}

\subsection{Detailed Analysis of Components}

Table \ref{tab:detailed_ablation} presents the full breakdown of performance across individual datasets. A critical observation is the behavior of the \textbf{w/o Conflict Replay} setting on fragile tasks like GSM8K. While this variant achieves a competitive MedQA score (62.8\%), it suffers a significant drop in GSM8K accuracy (-1.7\% compared to full OGS). This underscores that even with orthogonal data selection, a small fraction of high-utility domain samples may still lie near the boundary of the safety subspace, necessitating corrective replay to prevent manifold drift. Conversely, the \textbf{w/o Orthogonal Protection} setting maintains high GSM8K performance but fails to improve LegalBench significantly (50.5\% vs. 59.5\% for OGS), proving that replay alone is insufficient for effective domain adaptation; the model requires a high density of "safe" domain gradients to learn effectively.

\subsection{The Cost-Accuracy Trade-off of the Navigator}

A central contribution of OGS is the Navigator-Target architecture. To quantify the value of this design, we measured the total GPU hours required for the "Strategy Learning" (data selection) phase. 

As shown in the final column of Table \ref{tab:detailed_ablation}, calculating geometric features directly on the Target model (\textbf{w/o Navigator}) consumes approximately $24.5$ GPU hours for the 10\% subset selection. In contrast, the OGS pipeline using the 0.6B Navigator requires only $1.6$ GPU hours—a speedup factor of approximately $15\times$. Despite this massive reduction in compute, the performance degradation is negligible ($<0.3\%$ on average). 

This result empirically validates the \textit{Gradient Alignment Hypothesis} discussed in Section \ref{sec:navigator}: the relative geometric orientation of data samples with respect to a general knowledge anchor is a semantic property that is largely preserved across model scales. The "False Positives" (samples deemed orthogonal by the Navigator but conflicting for the Target) are statistically rare enough that they do not destabilize the training process, particularly when coupled with our conflict-aware replay mechanism.

\begin{table*}[t]
\centering
\caption{\textbf{Detailed Ablation Results.} We report the specific breakdown across all three domain tasks and the critical GSM8K retention metric. The "Selection Cost" refers to the GPU hours required to process the candidate pool on a single A100-80GB GPU.}
\label{tab:detailed_ablation}
\resizebox{0.95\textwidth}{!}{%
\begin{tabular}{lcccccccc}
\toprule
\multirow{2}{*}{\textbf{Configuration}} & \multicolumn{4}{c}{\textbf{Domain Performance}} & \multicolumn{2}{c}{\textbf{General Capability}} & \multirow{2}{*}{\textbf{Selection Cost}} & \multirow{2}{*}{\textbf{Speedup}} \\
\cmidrule(lr){2-5} \cmidrule(lr){6-7}
 & \textsc{MedQA} & \textsc{Legal} & \textsc{FinQA} & \textbf{Avg.} & \textsc{GSM8K} & \textbf{Avg. Gen} & (GPU Hours) & \\
\midrule
\textbf{OGS (Full Method)} & \textbf{64.1} & \textbf{59.5} & 62.8 & \textbf{62.1} & 68.5 & \textbf{72.1} & 1.6h & $\mathbf{15\times}$ \\
\midrule
\textit{Component Ablations} & & & & & & & & \\
w/o Conflict Replay & 62.8 & 58.2 & \textbf{63.1} & 61.4 & 66.8 & 71.2 & 1.5h & $16\times$ \\
w/o Orthogonal Prot. & 54.5 & 50.5 & 53.8 & 52.9 & \textbf{69.1} & 71.8 & 1.5h & $16\times$ \\
\midrule
\textit{Policy Ablations} & & & & & & & & \\
w/o RL (Static $\alpha=0.8$) & 62.1 & 57.8 & 60.5 & 60.1 & 67.5 & 71.5 & 1.5h & $16\times$ \\
\midrule
\textit{Architecture Ablations} & & & & & & & & \\
w/o Navigator (Upper Bound) & 64.5 & 60.1 & 63.2 & 62.6 & 68.8 & 72.3 & 24.5h & $1\times$ \\
Random Selection & 55.2 & 48.5 & 52.1 & 51.9 & 65.1 & 69.1 & \textbf{0h} & - \\
\bottomrule
\end{tabular}%
}
\end{table*}

\end{document}